\documentclass[letterpaper, 10 pt, conference]{ieeeconf} 
\usepackage[LGR,T1]{fontenc}
\usepackage[latin9]{inputenc}
\usepackage{verbatim}
\usepackage{amsmath}
\usepackage{amssymb}
\usepackage{graphicx}
\usepackage{graphics}
\usepackage{epsfig}
\usepackage{epstopdf}
\usepackage[caption=false]{subfig}
\usepackage{cite}

\usepackage{amsthm}	

\newtheorem{rem}{Remark}
\newtheorem{problem}{Problem}
\newtheorem{thm}{Theorem}
\newtheorem{defn}{Definition}
\newtheorem{prop}{Proposition}

\IEEEoverridecommandlockouts
\overrideIEEEmargins

  
\setlength{\textfloatsep}{10.0pt plus 2.0pt minus 2.0pt} 

\title{\LARGE \bf Distributed Cooperative Manipulation under Timed Temporal Specifications \rm}

\author{Christos K. Verginis and Dimos V. Dimarogonas \thanks{The authors are with the Centre for Autonomous Systems and ACCESS Linnaeus Centre, KTH Royal Institute of Technology, Stockholm 10044, Sweden. Emails: \{cverginis, dimos\}@kth.se.} \thanks{This work was supported by funding from the Knut and Alice Wallenberg Foundation, the Swedish Research Council (VR), the European Union's Horizon 2020 Research and Innovation Programme under the Grant Agreement No. 644128 (AEROWORKS) and the H2020 ERC Starting Grant BUCOPHSYS.}}

\begin{document}

\maketitle
\thispagestyle{empty}
\pagestyle{empty}

\begin{abstract}
This paper addresses the problem of cooperative manipulation of a single object by $N$ robotic agents under local goal specifications given as Metric Interval Temporal Logic (MITL) formulas. In particular, we propose a distributed model-free control protocol for the trajectory tracking of the cooperatively manipulated object without necessitating feedback of the contact forces/torques or inter-agent communication. This allows us to abstract the motion of the coupled object-agents system as a finite transition system and, by employing standard automata-based methodologies, we derive a hybrid control algorithm for the satisfaction of a given MITL formula. In addition, we use load sharing coefficients to represent potential differences in power capabilities among the agents. Finally, simulation studies verify the validity of the proposed scheme.

\end{abstract}

\section{Introduction \label{sec:Introduction}}

Multi-agent systems have gained significant attention over the last decade, since they provide several advantages with respect to single-agent setups. In the case of object manipulation, complex tasks involving heavy/large payloads and difficult maneuvers necessitate the employment of more than one robot. The problem of cooperative manipulation control has been studied extensively, using centralized schemes, where a central computer handles the agents' behavior, as well as decentralized setups, where each agent determines its actions on its own, either with partial or no communication at all \cite{Liu1998,Caccavale2008,Heck2013,Szewczyk2002,Tsiamis2015,Petitti_ICRA16,Wang_CDC16,sugar2002control,tanner2003nonholonomic,Erhart2013_2, MARKDAHL_IFAC2012}.   

In contrast to the related literature, which mainly considers the trajectory tracking of the manipulated object, we would like to define complex tasks over time, such as "never take the object to dangerous regions" or "keep moving the object from region A to B within a predefined time interval" which must be executed via the control actions of the agents. Such tasks can be expressed by temporal logic languages, which can describe complex planning objectives more efficiently than the well-studied navigation algorithms. Linear Temporal Logic (LTL) is the most common language that has been incorporated to the multi-agent motion planning problem \cite{guo2014cooperative,chen2012formal,Zhang_ACC16,Filippidis_ACC16,Mercado_CDC15}, without however considering time specifications. Metric and Metric Interval Temporal Logic (MTL, MITL) \cite{Alur94_timed_automata,Alur96,DSouza07} as well as Time Window Temporal Logic (TWTL) are languages that encode time specifications and were used for multi-agent motion planning in \cite{Alex16,Karaman2011,Aksaray_ICRA16}.

Regarding robotic manipulation, high level planning techniques have been proposed in \cite{Yamashita2003,kumar2009abstractions,Lionis2005} using common planning methods like configuration space potential fields and $A^{*}$ algorithms. In \cite{Lionis2005} the motion planning problem for a group of unicycles manipulating a rigid body is addressed and in \cite{kumar2009abstractions} an abstraction methodology is introduced; LTL specifications are employed in \cite{TsiamisJana2015}, where two mobile robots transport an object in a leader-follower scheme. Additionally, temporal logic formulas are utilized in \cite{Murray2012} for dexterous manipulation through robotic fingers and in \cite{Kavraki2015} for single manipulation tasks, without, however, incorporating the dynamics of the robotic arm in the abstracted model. In \cite{Chaimowicz2003} a hybrid framework for cooperative manipulation is presented.
 
For the continuous control part, impedance and/or force control is the most common methodology \cite{Caccavale2008,Heck2013,Szewczyk2002,Tsiamis2015}, in which the robotic arms employ sensors to obtain feedback of the contact forces/torques which, however, may result to performance decline due to sensor noise or mounting difficulties. Moreover, most works in the related literature consider known dynamic models and/or parameters of the object and the agents, whose accurate knowledge, however, can be a challenging issue. 

In this paper, we propose a novel hybrid control scheme for the cooperative manipulation of an object under MITL specifications.
In particular, we develop a distributed model-free control protocol for the trajectory tracking of the cooperative manipulated object with prescribed transient and steady state performance. The latter allows us to abstract the motion of the coupled system object-agents as a finite transition system. Then, by employing formal verification-based  methodologies, we derive a path that satisfies a given MITL task. The control scheme does not use any force/torque information at the contact points or any inter-agent communication and incorporates load sharing coefficients to account for differences in power capabilities among the agents.

The rest of the paper is organized as follows: Section \ref{sec:Notation-and-Preliminaries} introduces notation and preliminary background. Section \ref{sec:Problem-Formulation} describes the problem formulation and the overall system's model. The control strategy is presented in Section \ref{sec:main results}. Section \ref{sec:Simulation Results} verifies our approach through numerical simulation results and Section \ref{sec:Conclusion} concludes the paper.

\section{Notation and Preliminaries\label{sec:Notation-and-Preliminaries}}
\subsection{Notation}
The set of positive integers is denoted as $\mathbb{N}$ and the real $n$-coordinate space, with $n\in\mathbb{N}$, as $\mathbb{R}^n$;
$\mathbb{R}^n_{\geq 0}$ and $\mathbb{R}^n_{> 0}$ are the sets of real $n$-vectors with all elements nonnegative and positive, respectively. Given a set $S$, $2^S$ is the set of all subsets of $S$. The vector connecting the origins of coordinate frames $\{A\}$ and $\{B$\} expressed in frame $\{C\}$ coordinates in $3$-D space is denoted as $p^{\scriptscriptstyle C}_{{\scriptscriptstyle B/A}}\in{\mathbb{R}}^{3}$. Given $a\in\mathbb{R}^3$, $S(a)$ is the skew-symmetric matrix
defined according to $S(a)b = a\times b$. The rotation matrix from $\{A\}$ to $\{B\}$ is denoted as $R_{{\scriptscriptstyle B/A}}\in SO(3)$, where $SO(3)$ is the $3$-D rotation group.
The angular velocity of frame $\{B\}$ with respect to $\{A\}$ is
denoted as $\omega_{{\scriptscriptstyle B/A}}\in \mathbb{R}^{3}$ and it holds that \cite{Siciliano2009} $\dot{R}_{{\scriptscriptstyle B/A}}=S(\omega_{{\scriptscriptstyle B/A}})R_{{\scriptscriptstyle B/A}}$. We further denote as $\eta_{\scriptscriptstyle A/B}\in\mathbb{T}^3$ the Euler angles representing the orientation of $\left\{B\right\}$ with respect to $\left\{A \right\}$ and $\omega_{{\scriptscriptstyle B/A}} = J_{\scriptscriptstyle B/A}(\eta_{\scriptscriptstyle B/A})  \dot{\eta}_{\scriptscriptstyle B/A}$; $J_{\scriptscriptstyle B/A}:\mathbb{T}^3\rightarrow\mathbb{R}^{3\times3}$ is a smooth function representing the analytic Jacobian and $\mathbb{T}^3$ is the $3$-D torus. Moreover, $\mathcal{B}(c,r)$ denotes the $3$-D sphere of radius $r\geq 0$ and center $c\in\mathbb{R}^{3}$ and $d:\mathbb{R}^3\times\mathbb{R}^3\rightarrow\mathbb{R}$ is the $3$-D Euclidean distance. We further define the set $\mathbb{M} = \mathbb{R}^3\times\mathbb{T}^3$. For notational brevity, when a coordinate frame corresponds to an inertial frame of reference $\left\{I\right\}$, we will omit its explicit notation (e.g., $p_{\scriptscriptstyle B} = p^{\scriptscriptstyle I}_{\scriptscriptstyle B/I}, \omega_{\scriptscriptstyle B} = \omega^{\scriptscriptstyle I}_{\scriptscriptstyle B/I}, R_{\scriptscriptstyle B} = R_{\scriptscriptstyle B/I}$ etc.). Finally, all vector and matrix differentiations will be with respect to an inertial frame $\{I\}$, unless otherwise stated.

\subsection{Metric Interval Temporal Logic (MITL)}
The syntax of \textit{MITL} over a set of atomic propositions $\mathcal{AP}$ is defined by the grammar $\phi := p \:|\: \neg \phi \:|\: \bigcirc_I \phi \:|\: \lozenge_I \phi \:|\: \square_I \phi \:|\:\phi_{1}\mathcal{U}\phi_{2} $, where $p\in\mathcal{AP}$, $\bigcirc,\lozenge,\square$ and $\cup$ are the next, future, always and until operators, respectively. $I$ is one of the following intervals: $[i_1,i_2],[i_1,i_2),(i_1,i_2],$ $(i_1,i_2),[i_1,\infty),(i_1,\infty)$ with $i_1,i_2\in\mathbb{R}_{\geq 0}, i_2>i_1$.  

Given a set of atomic propositions $\mathcal{AP}$, an  MITL formula $\phi$ and an infinite sequence $r = (r_1,t_1)(r_2,t_2)\dots$, with $r_j\in 2^{AP}$ and $t_j\in\mathbb{R}_{\geq 0}, t_{j+1}>t_j, \forall j\in\mathbb{N}$, we define $(r,j)\models\phi, j\in\mathbb{N}$ ($r$ satisfies $\phi$ at $j$) in the point-wise semantics \cite{DSouza07}:
\small
\begin{eqnarray} 
(r,j) &\models& p \Leftrightarrow p\in r_j, \nonumber \\
(r,j) &\models& \neg\phi \Leftrightarrow (r,j) \not\models \phi \nonumber \\
(r,j) &\models&  \phi_1\land\phi_2 \Leftrightarrow (r,j) \models \phi_1 \text{ and } (r,j) \models \phi_2 \nonumber \\
(r,j) &\models&  \bigcirc_I\phi \Leftrightarrow (r,j+1)\models\phi \text{ and } t_{j+1}-t_j\in I \nonumber \\
(r,j) &\models&  \phi_1\mathcal{U}\phi_2 \Leftrightarrow \exists k,j, j\leq k, \text{s.t. } (r,k) \models \phi_2, \nonumber\\
	 && t_k - t_j \in I \text{ and } (r,m)\models \phi_1, \forall j\leq m \leq k.	\nonumber
\end{eqnarray}
\normalsize
Also, $\lozenge_I\phi = \text{True }\mathcal{U}\phi$ and $\square_I\phi = \neg\lozenge_I\neg\phi$. The sequence $r$ satisfies $\phi$, denoted as $r\models\phi$ if and only if $(r,1)\models\phi$. More information regarding MITL can be found in \cite{Alur96}, \cite{DSouza07}.

\subsection{Dynamical Systems}
Consider the initial value problem: 
\begin{equation}
\dot{\xi} = H(t,\xi), \xi(t_0)=\xi^0\in\Omega_\xi, \label{eq:initial value pr}
\end{equation}
with $H:[t_0,+\infty)\times\Omega_\xi\rightarrow\mathbb{R}^n$ where $\Omega_\xi\subset\mathbb{R}^n$ is a non-empty open set.
\begin{defn}
\cite{sontag2013mathematical} A solution $\xi(t)$ of the initial value problem (\ref{eq:initial value pr}) is maximal if it has no proper right extension that is also a solution of (\ref{eq:initial value pr}).
\end{defn}
\begin{thm} \label{Th:dynamical systems}
\cite{sontag2013mathematical} Consider problem (\ref{eq:initial value pr}). Assume that $H(t,\xi)$ is: a) locally Lipschitz on $\xi$ for almost all $t\in[t_0,+\infty)$, b) piecewise continuous on $t$ for each fixed $\xi\in\Omega_\xi$ and c) locally integrable on $t$ for each fixed $\xi\in\Omega_\xi$. Then, there exists a maximal solution $\xi(t)$ of (\ref{eq:initial value pr}) on $[t_0,t_{\max})$ with $t_{\max} > t_0$ such that $\xi(t)\in\Omega_{\xi},\forall t\in[t_0,t_{\max})$.
\end{thm}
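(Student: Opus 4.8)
The plan is to prove the result in three stages: establish a \emph{local} solution in the Carath\'eodory sense via a fixed-point argument, promote it to a \emph{unique} local solution using the Lipschitz hypothesis, and then glue together all such local solutions into a single maximal one.

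First I would recast the initial value problem (\ref{eq:initial value pr}) as the integral equation
\begin{equation}
\xi(t) = \xi^0 + \int_{t_0}^{t} H(s,\xi(s))\,ds, \label{eq:integral-form}
\end{equation}
whose absolutely continuous solutions coincide almost everywhere with solutions of (\ref{eq:initial value pr}); this is the appropriate notion here because hypothesis (b) only guarantees piecewise continuity in $t$, so $\dot\xi$ need not exist everywhere. Since $\Omega_\xi$ is open, I would fix $r>0$ with the closed ball $\{\xi : |\xi-\xi^0|\leq r\}\subset\Omega_\xi$. On this ball hypothesis (a) supplies, for almost every $t$, a Lipschitz constant $L(t)$, while (c) gives local integrability of $t\mapsto H(t,\xi^0)$; combining the two, one bounds $|H(s,\xi)|$ by an integrable function of $s$ on the ball. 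Consequently the right-hand side of (\ref{eq:integral-form}) is well defined for any continuous $\xi(\cdot)$ taking values in the ball.

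Next I would set up the operator $(\mathcal{T}\xi)(t) = \xi^0 + \int_{t_0}^{t} H(s,\xi(s))\,ds$ on the complete metric space of continuous functions $\xi:[t_0,t_0+\tau]\to\{\xi:|\xi-\xi^0|\le r\}$. Choosing $\tau>0$ small enough that the integral of the bounding function over $[t_0,t_0+\tau]$ is at most $r$ makes $\mathcal{T}$ map this space into itself, and shrinking $\tau$ further so that $\int_{t_0}^{t_0+\tau} L(s)\,ds < 1$ (or, to avoid shrinking, adopting a Bielecki-type weighted sup-norm) makes $\mathcal{T}$ a contraction. The Banach fixed-point theorem then yields a unique continuous fixed point, i.e. a local Carath\'eodory solution on $[t_0,t_0+\tau]$ that stays in $\Omega_\xi$.

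Finally I would extend to a maximal interval. A Gronwall estimate built on the integrable Lipschitz constant $L(\cdot)$ shows that any two solutions agree on the intersection of their domains, so all local solutions are mutually consistent. Defining $t_{\max}$ as the supremum of right endpoints of existence intervals and taking the union of the corresponding solutions produces a well-defined solution on $[t_0,t_{\max})$ with $t_{\max}>t_0$, which by construction lies in $\Omega_\xi$ and admits no proper right extension; this is the maximal solution. The main obstacle is the non-smoothness of $H$ in $t$: the contraction and uniqueness estimates must be carried out entirely in integral form with $t$-dependent, merely integrable bounds, and it is precisely the interplay of hypotheses (a)--(c) that guarantees these bounds are integrable so that the fixed-point and Gronwall arguments go through.
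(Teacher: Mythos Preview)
The paper does not prove this theorem; it is quoted verbatim from \cite{sontag2013mathematical} as a preliminary fact and used later without argument. There is therefore no ``paper's own proof'' to compare against.

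That said, your proposal is the standard Carath\'eodory local-existence/uniqueness/maximal-extension argument and is essentially what one finds in Sontag's text: recast (\ref{eq:initial value pr}) as an integral equation, use the integrable-in-$t$, Lipschitz-in-$\xi$ structure to run a contraction on a short interval, obtain uniqueness via an integral Gronwall inequality, and take the union over all right extensions. One small point to tighten: you assert that hypotheses (a) and (c) together yield an integrable bound $|H(s,\xi)|\le m(s)$ uniformly on the ball, but (a) only gives a Lipschitz constant for almost every fixed $t$, not an \emph{integrable} family $L(\cdot)$ a priori. In the usual formulation (and in Sontag) one assumes directly that $|H(t,\xi)|$ is dominated by a locally integrable $m(t)$ on compacts in $\xi$; under (a)--(c) as stated here you should indicate how the local Lipschitz constants can be chosen measurably and locally integrably in $t$ (e.g.\ by a measurable-selection argument or by strengthening (a) slightly). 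With that caveat, the outline is sound.
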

\begin{prop} \label{Prop:dynamical systems}
\cite{sontag2013mathematical} Assume that the hypotheses of Theorem \ref{Th:dynamical systems} hold. For a maximal solution $\xi(t)$ on the time interval $[t_0,t_{\max})$ with $t_{\max} < \infty$ and for any compact set $\Omega'_\xi\subset \Omega_\xi$ there exists a time instant $t'\in[t_0,t_{\max})$ such that $\xi(t')\notin\Omega'_\xi$.
\end{prop}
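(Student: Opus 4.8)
The plan is to argue by contradiction, exploiting the principle that a solution which stays in a compact subset of $\Omega_\xi$ cannot blow up and can therefore be continued past $t_{\max}$, contradicting maximality. Accordingly, I would assume that the claimed $t'$ does not exist, i.e. that $\xi(t)\in\Omega'_\xi$ for every $t\in[t_0,t_{\max})$, and aim to build a proper right extension of $\xi$ beyond $t_{\max}$.

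The first step is to bound the right-hand side along the trajectory by an integrable function. Fixing a reference point $\xi_\star\in\Omega'_\xi$ and using that $H$ is locally Lipschitz in $\xi$ for almost all $t$, together with the piecewise continuity and local integrability of $H(\cdot,\xi)$ in $t$ (hypotheses of Theorem \ref{Th:dynamical systems}), I would produce, via a finite subcover of the compact set $\Omega'_\xi$, a majorant $m$ that is integrable on the compact interval $[t_0,t_{\max}]$ and satisfies $\|H(t,\xi)\|\leq m(t)$ for almost all $t$ and all $\xi\in\Omega'_\xi$. Writing the solution in its integral (Carath\'eodory) form, for any $t_0\leq t_1<t_2<t_{\max}$ I would then estimate
\[
\|\xi(t_2)-\xi(t_1)\| \;\leq\; \int_{t_1}^{t_2}\|H(s,\xi(s))\|\,ds \;\leq\; \int_{t_1}^{t_2} m(s)\,ds,
\]
and by absolute continuity of the Lebesgue integral of $m$ the right-hand side tends to $0$ as $t_1,t_2\to t_{\max}^-$. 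This Cauchy criterion guarantees that $\xi_{\max}:=\lim_{t\to t_{\max}^-}\xi(t)$ exists, and since $\Omega'_\xi$ is closed, $\xi_{\max}\in\Omega'_\xi\subset\Omega_\xi$.

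Having secured a well-defined limit lying in the open set $\Omega_\xi$, I would extend the solution by setting $\xi(t_{\max})=\xi_{\max}$ and applying Theorem \ref{Th:dynamical systems} to the initial value problem started at $(t_{\max},\xi_{\max})$. This yields a solution on some interval $[t_{\max},t_{\max}+\delta)$ with $\delta>0$; concatenating it with $\xi$ produces a solution on $[t_0,t_{\max}+\delta)$ that properly extends $\xi$ to the right, contradicting maximality. Hence some $t'\in[t_0,t_{\max})$ with $\xi(t')\notin\Omega'_\xi$ must exist.

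The main obstacle I anticipate is precisely the construction in the second step: turning the qualitative Carath\'eodory hypotheses into a genuine $L^1$ majorant $m(t)$ that is uniform over $\Omega'_\xi$, since the local Lipschitz constant may itself depend on $t$, so a naive ``$\sup$ of a continuous function on a compact set'' bound is unavailable. Once the majorant is in place and the limit $\xi_{\max}\in\Omega_\xi$ is established, the extension argument and the resulting contradiction with maximality are routine.
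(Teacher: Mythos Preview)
The paper does not supply its own proof of this proposition; it is quoted directly from \cite{sontag2013mathematical} and used as a black box in the proof of Theorem~2. Your argument by contradiction---trap the trajectory in the compact set $\Omega'_\xi$, build an integrable majorant for $H$ along it, deduce that $\lim_{t\to t_{\max}^-}\xi(t)$ exists in $\Omega'_\xi\subset\Omega_\xi$, and then invoke Theorem~\ref{Th:dynamical systems} at $(t_{\max},\xi_{\max})$ to extend, contradicting maximality---is exactly the standard proof one finds in Sontag and elsewhere, and is correct. You have also correctly flagged the only genuinely delicate point, namely producing a single $m\in L^1[t_0,t_{\max}]$ that dominates $\|H(t,\cdot)\|$ uniformly over the compact $\Omega'_\xi$ under Carath\'eodory hypotheses; once that is done the rest is routine.
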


\begin{figure}[!btp]
\centering
\includegraphics[trim = 0cm 0cm 0cm -1.2cm,width = 0.35\textwidth]{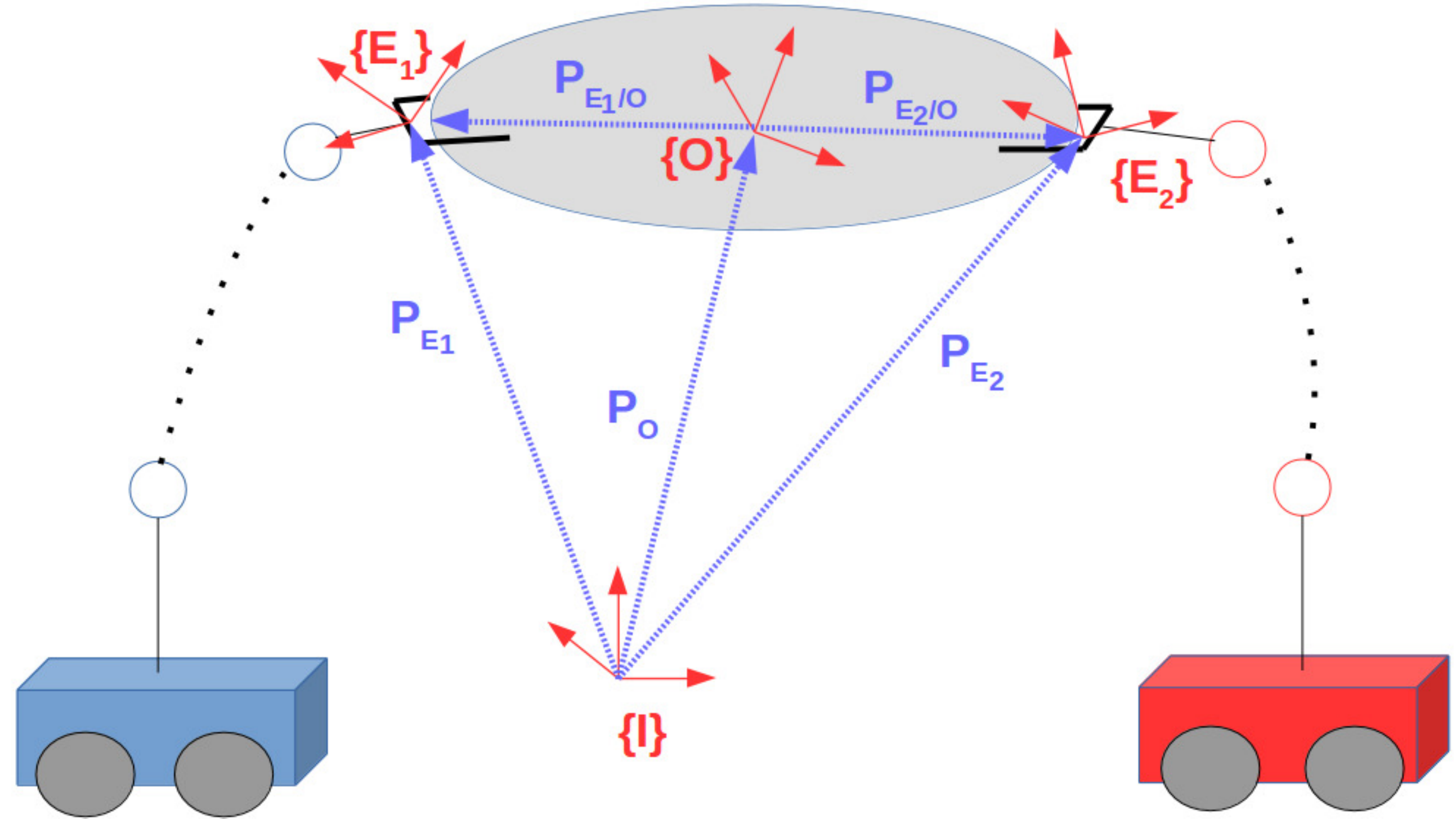}

\caption{Two robotic arms rigidly grasping an object.\label{fig:Two-robotic-arms}}
\end{figure}

\section{Problem Formulation\label{sec:Problem-Formulation}}

Consider a bounded workspace $\mathcal{W} \subset \mathbb{R}^{3}$ consisting of $N$ mobile manipulators rigidly grasping an object, as shown in Fig. \ref{fig:Two-robotic-arms}. We assume that each agent $i\in\{1,\dots,N\}$ has $\mathfrak{n}_i \geq 6$ degrees of freedom (generalized joint coordinates), denoted as $q_i:\mathbb{R}_{\geq 0}\rightarrow\mathbb{R}^{\mathfrak{n}_i}$. We also denote the entire joint space as $\overline{q}=[q^T_1,\cdots,q^T_N]^T:\mathbb{R}_{\geq 0}\rightarrow\mathbb{R}^{\mathfrak{n}}$, with $\mathfrak{n} =$ \small$\sum_{i=1}^{N}\mathfrak{n}_i$\normalsize. The reference frames corresponding to the $i$-th end-effector and the object's center of
mass are denoted with $\left\{ E_{i}\right\} $ and $\left\{ O\right\} $,
respectively, whereas $\left\{ I\right\} $ corresponds to an inertial
 reference frame. The rigidity of the grasps implies that the agents can exert any forces/torques along every direction to the object. We consider that each agent $i$ knows the position and velocity only of its own joint variables $q_i$ as well as its own and the object's geometric parameters. Moreover, no interaction force/torque measurements or on-line communication is required and the dynamic model of the object and the agents is considered unknown. Finally, we assume that the robotic agents and the object are away from kinematic and representation singularities \cite{Siciliano2009}. 

\subsection{System model \label{subsec:system_model}}

\subsubsection{Kinematics} 

In view of Fig. \ref{fig:Two-robotic-arms}, we have that 
\begin{eqnarray}
p_{{\scriptscriptstyle E_{i}}}(\overline{q}) & = &  p_{{\scriptscriptstyle O}}(\overline{q})+ p_{{\scriptscriptstyle E_{i}/O}}(\overline{q}) =p_{{\scriptscriptstyle O}}(\overline{q})+R_{{\scriptscriptstyle E_i}}(\overline{q}) p^{\scriptscriptstyle E_i}_{{\scriptscriptstyle E_{i}/O}} \nonumber \\ 
\eta_{\scriptscriptstyle E_i}(\overline{q})& =& \eta_{\scriptscriptstyle O}(\overline{q}) + \alpha_i, \label{eq: object agent pos}
\end{eqnarray}
where $\alpha_i\in\mathbb{T}^3$ is a known angular offset. We further define $x_{\scriptscriptstyle E_i}, x_{\scriptscriptstyle O}:\mathbb{R}^{\mathfrak{n}}\rightarrow\mathbb{M}$ with $x_{\scriptscriptstyle E_i}(\overline{q}) = [p^T_{{\scriptscriptstyle E_{i}}}(\overline{q}), \eta^T_{{\scriptscriptstyle E_{i}}}(\overline{q})]^T$ and $x_{\scriptscriptstyle O}(\overline{q}) = [p^T_{{\scriptscriptstyle O}}(\overline{q}), \eta^T_{{\scriptscriptstyle O}}(\overline{q})]^T$.  Note that, since (\ref{eq: object agent pos}) holds for all $i\in\{1,\dots,N\}$, the physical coupling between the object and the agents creates a dependence of $x_{\scriptscriptstyle O}$ and $x_{\scriptscriptstyle E_i}$ on all $\overline{q}$. Moreover, the grasp rigidity implies that $\omega_{{\scriptscriptstyle E_{i}}}=\omega_{{\scriptscriptstyle O}}$, i.e., $J_{\scriptscriptstyle E_i}(\eta_{\scriptscriptstyle E_i}(\overline{q}))\dot{\eta}_{\scriptscriptstyle E_i}=J_{\scriptscriptstyle O}(\eta_{\scriptscriptstyle O}(\overline{q}))\dot{\eta}_{\scriptscriptstyle O}$. Since the agents and the object are away from representation singularities, $J_{\scriptscriptstyle O}^{-1}$ and $J_{\scriptscriptstyle E_i}^{-1}$ are well-defined and smooth and hence we differentiate (\ref{eq: object agent pos}) to obtain:
\begin{equation}
\dot{x}_{{\scriptscriptstyle E_{i}}}=J_{{\scriptscriptstyle O_i}}(x_{{\scriptscriptstyle E_{i}}},x_{{\scriptscriptstyle O}})\dot{x}_{{\scriptscriptstyle O}},\label{eq:object-end-effector jacobian}
\end{equation}
where $J_{\scriptscriptstyle O_i}:\mathbb{M}\times\mathbb{M}\rightarrow\mathbb{R}^{6\times6}$ is the Jacobian from the object to the $i$-th agent:  
\begin{equation}
J_{{\scriptscriptstyle O_i}}=\left[\begin{array}{cc}
I_{{\scriptscriptstyle 3\times3}} & S(p_{{\scriptscriptstyle O/E_{i}}})J_{\scriptscriptstyle O}(\eta_{\scriptscriptstyle O})\\
0_{{\scriptscriptstyle 3\times3}} & J^{-1}_{\scriptscriptstyle E_{i}}(\eta_{\scriptscriptstyle E_{i}})J_{\scriptscriptstyle O}(\eta_{\scriptscriptstyle O})
\end{array}\right],
 \label{eq:jacobian O_i}
\end{equation}
which is always invertible due to the grasp rigidity. 

Furthermore, by noticing that $R_{\scriptscriptstyle E_i}(\overline{q})p^{\scriptscriptstyle E_i}_{\scriptscriptstyle E_i/O}=R_{\scriptscriptstyle O}(\eta_{\scriptscriptstyle O})p^{\scriptscriptstyle O}_{\scriptscriptstyle E_i/O}$, (\ref{eq: object agent pos}) can be rewritten as 
\begin{equation}
x_{\scriptscriptstyle E_i} = f_{\scriptscriptstyle O_i}(x_{\scriptscriptstyle O}), \label{eq:f_O}
\end{equation}
where $f_{\scriptscriptstyle O_i}:\mathbb{M}\rightarrow\mathbb{M}$ represents the coupled kinematics.  

\begin{rem}
Each agent $i$ can compute $x_{\scriptscriptstyle E_i},\dot{x}_{\scriptscriptstyle E_i}$ via its forward and differential kinematics \cite{Siciliano2009} $x_{\scriptscriptstyle E_i} = k_i(q_i)$ and $\dot{x}_{\scriptscriptstyle E_i} = J_i(q_i)\dot{q}_i$, respectively, where $k_i:\mathbb{R}^{\mathfrak{n}_i}\rightarrow\mathbb{R}^3$ and $J_i(q_i) = \partial k_i(q_i)/\partial q_i$ is the corresponding Jacobian. In addition, since the geometric parameters $p^{\scriptscriptstyle E_i}_{\scriptscriptstyle E_i/O}$ and $\alpha_i$ are known, $x_{\scriptscriptstyle O}$ and $\dot{x}_{\scriptscriptstyle O}$ can be computed by inverting eq. (\ref{eq: object agent pos}) and (\ref{eq:object-end-effector jacobian}), without employing any sensory data.
\end{rem}

\subsubsection{\bf Object Dynamics \rm}

The Newton-Euler equation for the object's second order dynamics is:
\small
\begin{eqnarray}
 M_{\scriptscriptstyle O}(x_{\scriptscriptstyle O})\ddot{x}_{{\scriptscriptstyle O}}+C_{{\scriptscriptstyle O}}(x_{\scriptscriptstyle O},\dot{x}_{\scriptscriptstyle O})\dot{x}_{{\scriptscriptstyle O}}+g_{\scriptscriptstyle O}(x_{\scriptscriptstyle O})+w_{\scriptscriptstyle O}(t)  =  \lambda_{\scriptscriptstyle O},\label{eq:object dynamics} 
\end{eqnarray}
\normalsize
where $M_{\scriptscriptstyle O}:\mathbb{M}\rightarrow\mathbb{R}^{6\times6}$ is the positive definite inertia matrix, $C_{{\scriptscriptstyle O}}:\mathbb{M}\times\mathbb{R}^6\rightarrow\mathbb{R}^{6\times6}$ is the Coriolis matrix, $g_{\scriptscriptstyle O}:\mathbb{M}\rightarrow\mathbb{R}^{6}$ is the gravity vector, $w_{\scriptscriptstyle O}:\mathbb{R}_{\geq 0}\rightarrow\mathbb{R}^6$ is a bounded vector representing external disturbances and $\lambda_{\scriptscriptstyle O}$ is the 	force vector acting on the object's center of mass. All aforementioned vector fields are continuous and unknown.

\subsubsection{\bf Agent Dynamics \rm}

The task-space dynamics for agent $i\in\{1,\dots,N\}$ are given by \cite{Siciliano2009}:
\begin{eqnarray}
M_{i}(x_{\scriptscriptstyle E_i})\ddot{x}_{{\scriptscriptstyle E_{i}}}+C_{i}(x_{\scriptscriptstyle E_i},\dot{x}_{\scriptscriptstyle E_i})\dot{x}_{{\scriptscriptstyle E_{i}}}+g_{i}(x_{\scriptscriptstyle E_i}) \nonumber \\ 
+ f_i(x_{\scriptscriptstyle E_i},\dot{x}_{\scriptscriptstyle E_i}) + w_i(t) + \lambda_{i}& =  & u_{i}   \label{eq:manipulator task space dynamics}
\end{eqnarray}
where $M_{i}:\mathbb{M}\rightarrow\mathbb{R}^{6\times6}$ is the task-space positive definite  inertia matrix,  $C_{i}:\mathbb{M}\times\mathbb{R}^6\rightarrow\mathbb{R}^6$ represents the task-space Coriolis matrix, $g_{i}:\mathbb{M}\rightarrow\mathbb{R}^6$
is the task-space gravity vector, $f_{i}:\mathbb{M}\times\mathbb{R}^6\rightarrow\mathbb{R}^6$ is a vector field representing model uncertainties and $w_i:\mathbb{R}_{\geq 0}\rightarrow\mathbb{R}^6$ is a bounded vector representing external disturbances. Similarly to (\ref{eq:object dynamics}), the aforementioned vector fields are continuous and completely unknown; $u_i\in\mathbb{R}^6$ is the task space wrench acting as the control input and  $\lambda_{i}\in\mathbb{R}^{6}$ is the generalized force vector that agent $i$ 
exerts on the object.
\begin{rem}
The task-space wrench $u_i$ can be translated to the joint space inputs $\tau_i\in\mathbb{R}^{\mathfrak{n}_i}$ via $\tau_{i}=J_{i}^{T}(q_i)u_{i}+(I_{{ \mathfrak{n_{i}}\times \mathfrak{n_{i}}}}-J^{T}_{i}(q_i)\bar{J}^{T}_{i}(q_i))\tau_{i_{0}}$, where  $\bar{J}_i$ is a generalized inverse of $J_i$ \cite{Siciliano2009}. The term $\tau_{i_{0}}$ concerns redundant agents ($\mathfrak{n}_i > 6$) and does not contribute to end-effector forces. 
\end{rem}

\subsubsection{\bf Coupled Dynamics \rm}

The kineto-statics duality \cite{Siciliano2009} along with the grasp rigidity suggest that the 
force $\lambda_{\scriptscriptstyle O}$ acting on the object center of mass and the generalized forces $\lambda_i, i\in\{1,\dots,N\}$ exerted by the
agents at the contact points are related through 
\begin{equation}
\lambda_{\scriptscriptstyle O}=G^{T}(\overline{x})\overline{\lambda},\label{eq:grasp matrix}
\end{equation}
where $\overline{x}=[x_{{\scriptscriptstyle O}}^{T}, \overline{x}_{{\scriptscriptstyle E}}^{T}]^T\in\mathbb{M}^{N+1}, \overline{x}_{{\scriptscriptstyle E}} = [x_{{\scriptscriptstyle E_{1}}}^{T},  \cdots, x_{{\scriptscriptstyle E_{N}}}^{T}]^{T}\in\mathbb{M}^{N}, \overline{\lambda}=[
\lambda^{T}_{1}, \cdots, \lambda^{T}_{N}]^{T}\in\mathbb{R}^{6N}$ and $G:\mathbb{M}^{N+1}\rightarrow{R}^{6N\times6}$ is the grasp matrix, with $G(\overline{x})=[
J_{{\scriptscriptstyle O_1}}^{T},\cdots,J_{{\scriptscriptstyle O_N}}^{T}]^{T}$. 

Next, we substitute (\ref{eq:object-end-effector jacobian}) and its derivative in (\ref{eq:manipulator task space dynamics}) and we obtain in vector form after rearranging terms:
\begin{eqnarray}
\overline{\lambda} &=& \overline{u} - \overline{M}(\overline{x}_{{\scriptscriptstyle E}})G(\overline{x})\ddot{x}_{\scriptscriptstyle O}  - \overline{g}(\overline{x}_{{\scriptscriptstyle E}})-\overline{f}(\overline{x}_{{\scriptscriptstyle E}}) - \overline{w}(t) \nonumber \\
& & - (\overline{M}(\overline{x}_{{\scriptscriptstyle E}})\dot{G}(\overline{x},\dot{\overline{x}}) + \overline{C}(\overline{x}_{{\scriptscriptstyle E}},\dot{\overline{x}}_{{\scriptscriptstyle E}})G(\overline{x})  ) \dot{x}_{\scriptscriptstyle O} \label{eq:coupled dynamics 1}
\end{eqnarray}
where we have used the stack forms $\overline{M}=  \text{diag}\{\left[M_{i}\right]_{i\in\{1,\dots,N\}}\},\overline{C} = \text{diag}\{\left[C_{i}\right]_{i\in\{1,\dots,N\}}\}, \overline{g}=[g_{1}^{T}, \cdots, g_{N}^{T}]^{T}, \overline{f}=[f_{1}^{T}, \cdots, f_{N}^{T}]^{T}, \overline{u}=[u_{1}^{T}, \cdots, u_{N}^{T}]^{T}$ and $\overline{w}=[w_{1}^{T}, \cdots, w_{N}^{T}]^{T}$.
By substituting (\ref{eq:coupled dynamics 1}) and (\ref{eq:object dynamics}) in (\ref{eq:grasp matrix}), we obtain the coupled dynamics:
\begin{equation}
\widetilde{M}(\overline{x})\ddot{x}_{{\scriptscriptstyle O}}+\widetilde{C}(\overline{x},\dot{\overline{x}})\dot{x}_{{\scriptscriptstyle O}}+\widetilde{h}(\overline{x},\dot{\overline{x}}) + \widetilde{w}(\overline{x},t) =G^{T}(\overline{x})\overline{u},\label{eq:coupled dynamics 2}
\end{equation}
where $\widetilde{M}(\overline{x}) =  M_{\scriptscriptstyle O}+G^{T}\overline{M}G, \widetilde{C}(\overline{x},\dot{\overline{x}})  =  C_{\scriptscriptstyle O}+G^{T}\overline{M}\dot{G}+G^{T}\overline{C}G, \widetilde{h}(\overline{x}, \dot{\overline{x}}) = g_{\scriptscriptstyle O} + G^{T}\overline{g}+G^{T}\overline{f}$, and $\widetilde{w}(\overline{x},t) = w_{\scriptscriptstyle O} + G^{T}\overline{w}$.
It is straightforward to deduce the positive definiteness of $\widetilde{M}$ as well as the continuity of all the above coupled vector fields due to the continuity of the individual terms.

\begin{figure}[!tbp]
  \centering
  \subfloat[Illustration of $\hat{L}$]{\includegraphics[trim = 0cm 0.5cm 1.5cm -0.75cm,width=0.3\textwidth]{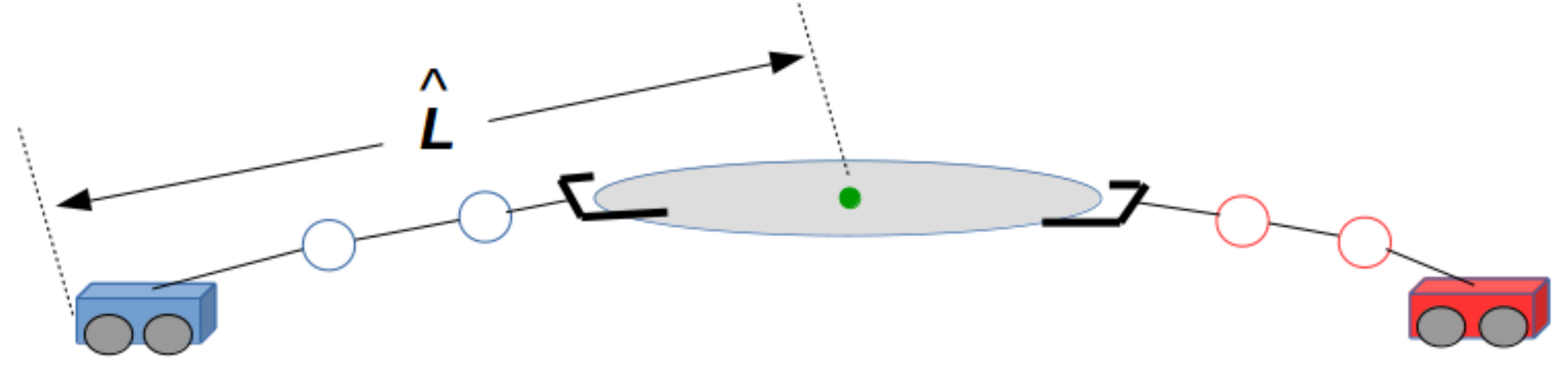}\label{fig:worksp_discretization:L0}}
  \hfill
  \subfloat[Workspace Discretization]{\includegraphics[trim = 0cm -0.5cm 0cm -1.5cm,width=0.3\textwidth]{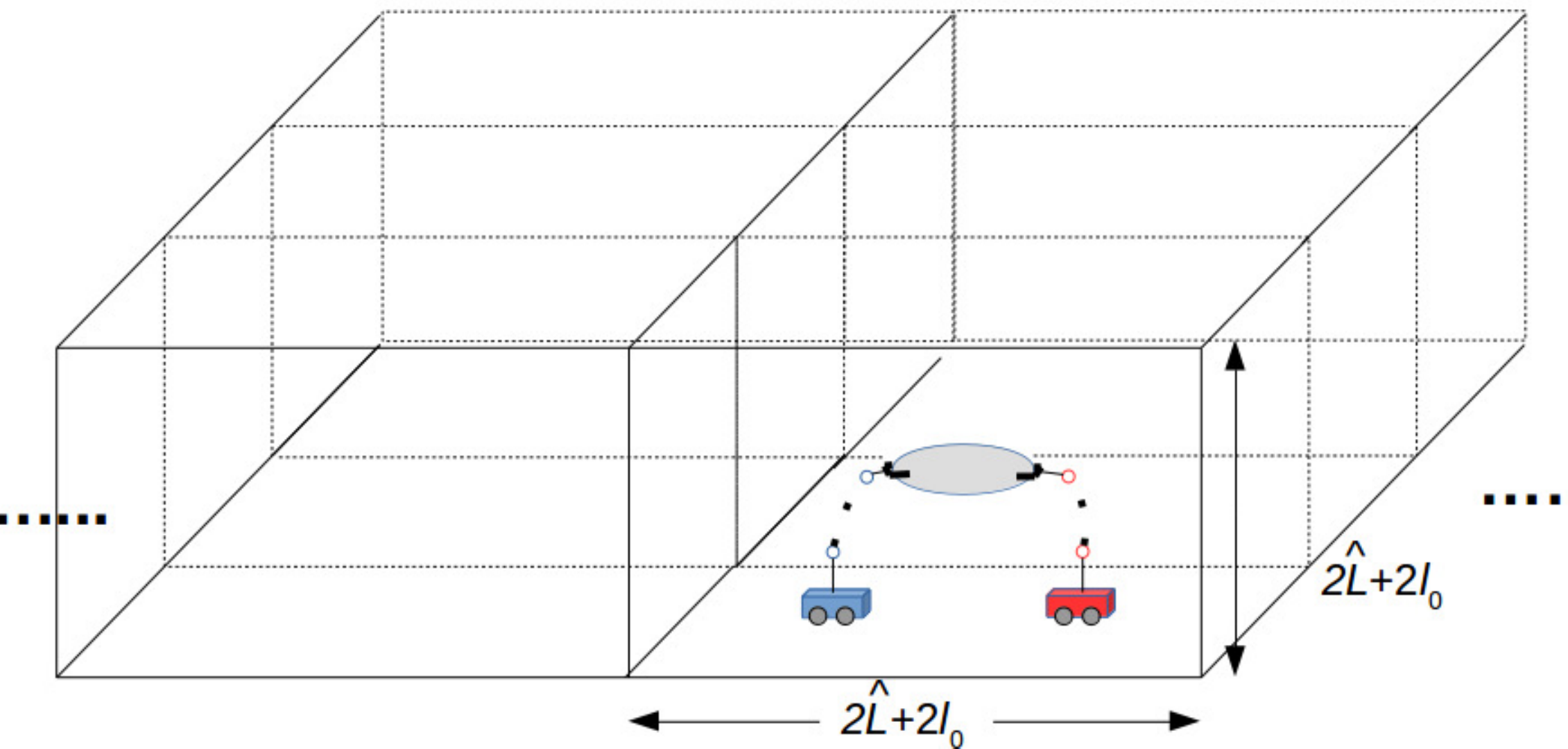}\label{fig:worksp_discretization:grid3d}}
  \caption{(a): An example of the agents of Fig. \ref{fig:Two-robotic-arms} in the configuration that derives $\hat{L}$. (b): The workspace partition according to the bounding box of the coupled system.}\label{fig:worksp_discretization}
\end{figure}

\subsection{Workspace Partition} \label{subsec:wsp discret}

As mentioned in Section \ref{sec:Introduction}, we are interested in defining MITL formulas over certain properties in a discrete set of regions of the workspace. Therefore, we provide now a partition of $\mathcal{W}$ into cell regions. We first define the set $\mathcal{S}_{\overline{q}}$ that consists of all points $p_s\in\mathcal{W}$ that physically belong to the coupled system, i.e. they consist part of either the agents' or the object' volume. Note that these points depend directly on the actual value of $\overline{q}$. We further define the constant $\hat{L} \geq \sup_{\substack{\overline{q}\in\mathbb{R}^{\mathfrak{n}}\\p_s\in\mathcal{S}_{\overline{q}}}} d(p_s,p_{\scriptscriptstyle O}(\overline{q}))$. Note that, although the explicit computation of $\mathcal{S}_{\overline{q}}$ may not be possible, $\hat{L}$ is an upper bound of the maximum distance between the object center of mass and a point in the coupled system's volume over all possible configurations $\overline{q}$, and thus, it can be measured. For instance, Fig. \ref{fig:worksp_discretization}\subref{fig:worksp_discretization:L0} shows $\hat{L}$ for the system of Fig. \ref{fig:Two-robotic-arms}. It is straightforward to conclude that
\begin{equation}
p_s\in\mathcal{B}(p_{\scriptscriptstyle O}(\overline{q}),\hat{L}),\forall p_s\in\mathcal{S}_{\overline{q}}, \overline{q}\in\mathbb{R}^{\mathfrak{n}}. \label{eq:p_s in L_hat}
\end{equation}
 Next, we partition the workspace $\mathcal{W}$ into $R$ equally sized rectangular regions $\Pi=\left\{\pi_{1},\dots,\pi_{R}\right\}$, whose geometric centers are denoted as $p^c_{\pi_j}\in\mathcal{W}, j\in\{1,\dots,R\}$. The length of the region sides is set as $D = 2\hat{L}+2l_0$, where $l_0$ is an arbitrary positive constant. Hence, each region $\pi_j$ can be formally defined as follows: 
 \small
 \begin{eqnarray}
\pi_j  &=& \{ p\in\mathcal{W} \text{ s.t. } (p)_k\in [ (p^c_{\pi_j})_k - \hat{L}-l_0, (p^c_{\pi_j})_k + \hat{L}+l_0 ), \nonumber \\
   & &   \forall k\in\{x,y,z\} \},  \label{eq:discretization 1} \nonumber
\end{eqnarray}
\normalsize
with $d(p^c_{\pi_{j+1}},p^c_{\pi_{j}}) = (2\hat{L} + 2l_0), \forall j\in\{1,\dots,R-1\}$ and $(p^c_{\pi_j})_z = \hat{L} + l_0, \forall j\in\{1,\dots,R\}$. The notation $(a)_k, k\in\{x,y,z\}$ denotes the $k$-th coordinate of $a=[(a)_x,(a)_y,(a)_z]^T\in\mathbb{R}^3$. An illustration of the aforementioned partition is depicted in \ref{fig:worksp_discretization}\subref{fig:worksp_discretization:grid3d}.

Note that each $\pi_j$ is a uniformly bounded, convex and well-connected set and also $\pi_j \cap \pi_{j'} = \emptyset, \forall j,j'\in\{1,\dots,R\}$ with $j\neq j'$. We also define the neighborhood of region $\pi_j$ as the set of its adjacent regions $\mathcal{D}(\pi_j) = \{\pi_{j'}\in\Pi \text{ s.t. } d(p^c_{\pi_j},p^c_{\pi_{j'}}) = (2\hat{L} + 2l_0) \}$, which is symmetric, i.e., $\pi_{j'}\in\mathcal{D}(\pi_j) \Leftrightarrow \pi_{j}\in\mathcal{D}(\pi_{j'})$.

To proceed we need the following definitions regarding the timed transition of the coupled system between two regions $\pi_j,\pi_{j'}$: 
\begin{defn} \label{def:system in region}
The coupled system object-agents is in region $\pi_j$ at a configuration $\overline{q}$, denoted as $\mathcal{A}(\overline{q})\in\pi_j$, if and only if (i) $p_s\in\pi_j, \forall p_s\in\mathcal{S}_{\overline{q}}$ and (ii) $d(p_{\scriptscriptstyle O}(\overline{q}),p^c_{\pi_j}) < l_0$.
\end{defn}

\begin{defn} \label{def:transition}
Assume that $\mathcal{A}(\overline{q}(t_0))\in\pi_j, j\in\{1,\dots,R\}$, for some $t_0\in\mathbb{R}_{\geq 0}$. Then, there exists a transition for the coupled system object-agents from $\pi_j$ to $\pi_{j'}, j'\in\{1,\dots,R\}$ with time duration $\delta t_{j,j'}\in\mathbb{R}_{\geq 0}$, denoted as $\pi_j\rightarrow\pi_{j'}$, if and only if there exists a bounded control trajectory $\overline{u}$ in (\ref{eq:coupled dynamics 2}), such that $\mathcal{A}(\overline{q}(t_0 + \delta t_{j,j'}))\in\pi_{j'}$ and $p_s\in\pi_j\cup\pi_{j'}, \forall p_s\in\mathcal{S}_{\overline{q}}, t\in[t_0,t_0 + \delta t_{j,j'}]$.
\end{defn}
Note that the entire system object-agents must remain in $\pi_j,\pi_{j'}$ during the transition and therefore the requirement $\pi_{j'}\in\mathcal{D}(\pi_j)$ is implicit in the definition above. 

\subsection{Specification} \label{subsec:specf}
Given the workspace partition, we can introduce a set of atomic propositions $\mathcal{AP}$ for the object, which are expressed as boolean variables that correspond to  properties of interest in the regions of the workspace (e.g., "Obstacle region", "Goal region"). Formally, the labeling function $\mathcal{L}:\Pi\rightarrow2^{\mathcal{AP}}$ assigns to each region $\pi_j$ the subset of the atomic propositions $\mathcal{AP}$ that are true in $\pi_j$. 
\begin{defn} \label{def:specification}
Assume that $\overline{q}(t)$ is a trajectory. Then, a \textit{timed sequence} of $\overline{q}$ is the infinite sequence $\beta=(\overline{q}_1(t),t_1,)(\overline{q}_2(t),t_2,)\dots$, with $t_m\in\mathbb{R}_{\geq 0}, t_{m+1} > t_m$ and $\mathcal{A}(\overline{q}_m(t_m))\in\pi_{j_m}, j_m\in\{1,\dots,R\}, \forall m\in\mathbb{N}$.
The \textit{timed behavior} of $\beta$ is the infinite sequence $\sigma_\beta = (\sigma_1,t_1)(\sigma_2,t_2)\dots$, with $\sigma_m\in2^{\mathcal{AP}}, \sigma_m\in\mathcal{L}(\pi_{j_m})$ for  $\mathcal{A}(\overline{q}_m(t_m))\in\pi_{j_m}, j_m\in\{1,\dots,R\}, \forall m\in\mathbb{N}$, i.e., the set of atomic propositions that are true when $\mathcal{A}(\overline{q}_m(t_m))\in\pi_{j_m}$. 
\end{defn}
\begin{defn}
The timed run $\beta$ satisfies an MITL formula $\phi$ if and only if $\sigma_{\beta} \models \phi$. 
\end{defn}

We are now ready to state the problem treated in this paper.

\begin{problem} \label{problem 1}
Given $N$ agents rigidly grasping an object in $\mathcal{W}$ subject to the coupled dynamics described by (\ref{eq:coupled dynamics 2}), the workspace partition $\Pi$ such that $\mathcal{A}(\overline{q}(0))\in\pi_j,j\in\{1,\dots,R\}$, a MITL formula $\phi$ over $AP$ and the labeling function $\mathcal{L}$, develop a control strategy that achieves a timed sequence $\beta$ which yields the satisfaction of $\phi$.   
\end{problem}

\section{Main Results} \label{sec:main results}

\begin{figure}[!btp]
\centering
\includegraphics[trim = 0cm 0cm 0cm -1cm,scale=0.25]{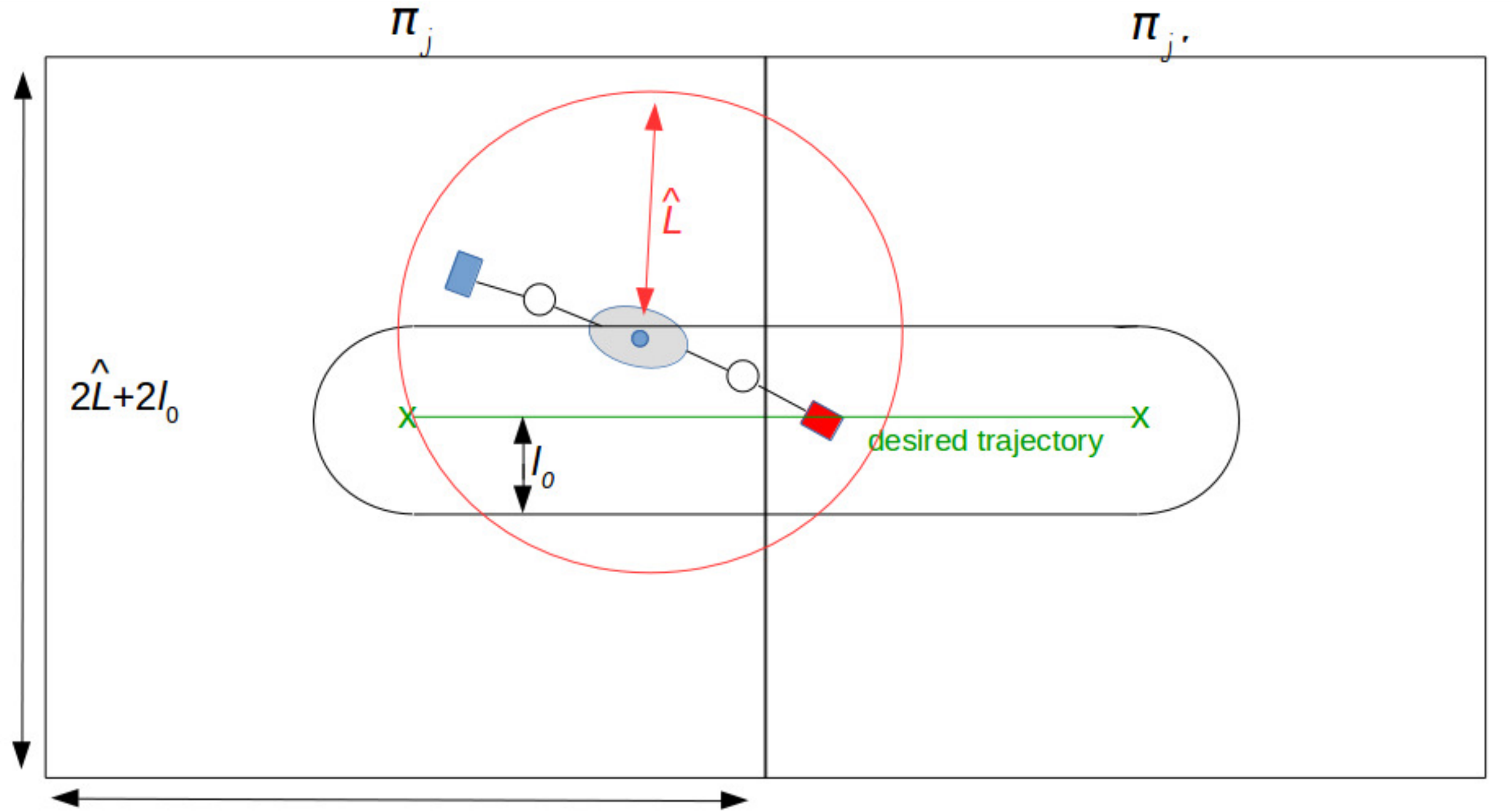}

\caption{A transition between two adjacent regions $\mathsf{\pi}_j$ and $\mathsf{\pi}_{j'}$. Since $p_{\scriptscriptstyle O}\in\mathcal{B}(p_d(t),l_0)$, we conclude that $p_s\in\mathcal{B}(p_{\scriptscriptstyle O},\hat{L})\in\mathcal{B}(p_d(t),l_0+\hat{L})\in\pi_j\cup\pi_{j'}$ . \label{fig:grid3}}
\end{figure}

\subsection{Control Design} \label{subsec:control design}
The first ingredient of the proposed solution, which constitutes one of the main novelties of this work, is the design of a decentralized control protocol $\overline{u}$ such that a transition relation between two adjacent regions according to Def. \ref{def:transition} is established. Assume, therefore, that $\mathcal{A}(\overline{q}(t_0))\in\pi_j, t_0\in\mathbb{R}_{\geq 0}$. We wish to find a bounded $\overline{u}$, such that (i) $\mathcal{A}(\overline{q}(t_0+\delta t_{j,j'}))\in\pi_{j'}$ with $\pi_{j'}\in\mathcal{D}(\pi_j)$ and (ii) $p_s\in\pi_j\cup\pi_{j'}, \forall p_s\in\mathcal{S}_{\overline{q}}, t\in[t_0,t_0+\delta t_{j,j'}]$, for a predefined arbitrary constant $\delta t_{j,j'}\in\mathbb{R}_{\geq 0}
$ corresponding to the transition $\pi_j \rightarrow \pi_{j'}$.

The first step is to associate to the transition a smooth trajectory defined by the line segment that connects $p^c_{\pi_j}$ and $p^c_{\pi_{j'}}$, i.e. define $p_d:[t_0,\infty)\rightarrow\mathbb{R}^3$, such that $p_d(t_0) =  p^c_{\pi_j}, p_d(t) = p^c_{\pi_{j'}}, \forall t\geq t_0+\delta t_{j,j'}$ and
\begin{equation}
\mathcal{B}(p_d(t), \hat{L}+l_0)\in\pi_j\cup\pi_{j'},\forall t\geq t_0.  \label{eq:desired_tr}
\end{equation}
If we guarantee that the object's center of mass stays $l_0$-close to $p_d$, i.e., $d(p_{\scriptscriptstyle O}(t),p_d(t))<l_0$, then $d(p_{\scriptscriptstyle O}(t_0+\delta t_{j,j'}), p^c_{\pi_{j'}}) < l_0$ and, by invoking (\ref{eq:p_s in L_hat}) and (\ref{eq:desired_tr}), we obtain $p_s\in\mathcal{B}(p_{\scriptscriptstyle O}(t),\hat{L})\in\mathcal{B}(p_d(t),\hat{L}+l_0)\in\pi_j\cup\pi_{j'},\forall p_s\in\mathcal{S}_{\overline{q}}, t\geq t_0$ (and therefore $t\in[t_0,t_0+\delta t_{j,j'}]$), and thus the requirements of Def. \ref{def:transition} for the transition relation are met. Fig. \ref{fig:grid3} illustrates the aforementioned reasoning.
Along with $p_d$, we also define a desired smooth trajectory $\eta_d:[t_0,\infty)\rightarrow\mathbb{T}^3$ for the object orientation and we form the desired pose trajectory $x_d:[t_0,\infty)\rightarrow\mathbb{M}$, with $x_d(t) = [x_{d_1}(t),\dots,x_{d_6}(t)]^T = [p^T_d(t), \eta^T_d(t)]^T$. We can now define the associated pose errors $e_s:[t_0,\infty)\rightarrow\mathbb{M}$ with 
\begin{equation}
e_s(t) = [e_{s_1}(t),\cdots, e_{s_6}(t)]^T = x_{\scriptscriptstyle O}(\overline{q}(t)) - x_d(t). \label{eq:e_s}
\end{equation}
A suitable methodology for the control design in hand is that of prescribed performance control, recently proposed in \cite{Bechlioulis2014}, which is adapted in this work in order to achieve predefined transient and steady state response bounds for the pose errors. As stated in \cite{Bechlioulis2014}, prescribed performance characterizes the behavior where the aforementioned errors evolve strictly within a predefined region that is bounded by absolutely decaying functions of time, called performance functions. The mathematical expressions of prescribed performance are given by the inequalities: $-\rho_{s_k}(t)<e_{s_k}(t)<\rho_{s_k}(t), \forall k\in\{1,\dots,6\}$, where $\rho_{s_k}:[t_0,\infty)\rightarrow\mathbb{R}_{> 0}$ with $\rho_{s_k}(t)=(\rho^{\scriptscriptstyle 0}_{s_k}-\rho^{\scriptscriptstyle \infty}_{s_k})e^{-l_{s_k}(t-t_0)}+\rho^{\scriptscriptstyle \infty}_{s_k}$ and $l_{s_k}>0, \rho^{\scriptscriptstyle 0}_{s_k}>\rho^{\scriptscriptstyle \infty}_{s_k}>0$,
are designer-specified, smooth, bounded and decreasing positive functions of time with positive parameters $l_{s_k},\rho^{\scriptscriptstyle \infty}_{s_k}$, incorporating the desired transient and steady state performance respectively. In particular, the decreasing rate of $\rho_{s_k}$, which is affected by the constant $l_{s_k}$, introduces a lower bound on the speed of convergence of $e_{s_k}(t)$. Furthermore, the constants $\rho^{\scriptscriptstyle \infty}_{s_k}$ can be set arbitrarily small, achieving thus practical convergence of the pose errors to zero.

Next, we propose a state 
feedback control protocol that does not incorporate any information on the agents' or the object's dynamics or the external disturbances and guarantees that $\left|e_{s_k}(t)\right|<\rho_{s_k}(t)$ for all $[t_0,\infty)$ and hence $[t_0,t_0+\delta t_{j,j'}]$, which,
by appropriately selecting $\rho_{s_k}(t)$ and given that $\mathcal{A}(\overline{q}(t_0))\in\pi_j$, guarantees a transition $\pi_j\rightarrow\pi_{j'}$ with time duration of $\delta t_{j,j'}$. Thus, given the pose errors (\ref{eq:e_s}): 

\textbf{Step I-a}. Select the corresponding functions $\rho_{s_k}(t)=(\rho^{\scriptscriptstyle 0}_{s_k}-\rho^{\scriptscriptstyle \infty}_{s_k})e^{-l_{s_k})(t-t_0)}+\rho^{\scriptscriptstyle \infty}_{s_k}$ with $\rho^{\scriptscriptstyle 0}_{s_k} = l_0, \forall k\in\{1,2,3\}, \rho^{\scriptscriptstyle 0}_{s_k} > \lvert e_{s_k}(t_0) \rvert, \forall k\in\{4,5,6\}$ and $\rho^{\scriptscriptstyle 0}_{s_k} > \rho^{\scriptscriptstyle \infty}_{s_k} >0 , l_{s_k}>0, \forall k\in\{1,\dots,6\}$.  

\textbf{Step I-b}. Define the normalized errors $\xi_{s_k}:[t_0,\infty)\rightarrow\mathbb{R}$:
\begin{equation}
\xi_{s_k}(t) = \rho_{s_k}^{-1}(t)e_{s_k}(t), \forall k\in\{1,\dots,6\},	\label{eq:ksi_s}
\end{equation}
and design the reference velocity vector as $\dot{x}^*_d:\mathbb{R}^6\times[t_0,\infty)\rightarrow\mathbb{R}^6$, with $\dot{x}^*_d = [\dot{x}^*_{d_1},\dots,\dot{x}^*_{d_6}]^T$ and: 
\begin{equation}
\dot{x}^*_{d_k}(\xi_{s_k},t) = -g_{s_k}\varepsilon_{s_k}(\xi_{s_k}), \forall k\in\{1,\dots,6\},	\label{eq:x_dot_d}
\end{equation}
where $g_{s_k} > 0$ and $\varepsilon_{s_k}:\mathbb{R}\rightarrow\mathbb{R}$ is the transformed error:
\begin{equation}
\varepsilon_{s_k}(\xi_{s_k}) =  \ln\left(\dfrac{1+\xi_{s_k}}{1-\xi_{s_k}}\right), \forall k\in\{1,\dots,6\}. \label{eq:epsilon_s}
\end{equation}
Before proceeding, note that (\ref{eq:e_s}) and (\ref{eq:ksi_s}) suggest that $x_{\scriptscriptstyle O} = P_s(t)\xi_s + x_d(t)$, where \small $\xi_s = [\xi_{s_1},\dots,\xi_{s_6}]^T$ \normalsize and \small$P_s = \text{diag}\{[\rho_{s_k}]_{k\in\{1,\dots,6\}}\}$ \normalsize. Therefore, by employing (\ref{eq:f_O}) we can write $\overline{x} = $ \small$[x^T_{\scriptscriptstyle O}, f^T_{\scriptscriptstyle O_1}(x_{\scriptscriptstyle O}),\dots,f^T_{\scriptscriptstyle O_N}(x_{\scriptscriptstyle O})]^T = [\xi^T_sP_s(t) + x^T_d(t), f^T_{\scriptscriptstyle O_1}(P_s(t)\xi_s + x_d(t)),\dots,f^T_{\scriptscriptstyle O_N}(P_s(t)\xi_s + x_d(t))]^T = \overline{f}_{\scriptscriptstyle O}(P_s(t)\xi_s + x_d(t))$\normalsize, i.e., we can express $\overline{x}$ as a function of $\xi_s$ and $t$. Therefore, in the following, the dependence on $\overline{x}$ will directly imply dependence on $\xi_s, t$. Similarly, dependence on $\dot{\overline{x}}$ will imply dependence on $\xi_s, \dot{\xi}_s$ and $t$. 

\textbf{Step II-a}. Define the velocity error vector $e_v:\mathbb{R}^6\times[t_0,\infty)\rightarrow\mathbb{R}^6$ with $e_v(\xi_s,t)=[e_{v_1}(\xi_{s_1},t),\dots,e_{v_6}(\xi_{s_6},t)]^T=\dot{x}_{\scriptscriptstyle O}(\overline{q}(t)) - \dot{x}^*_d(\xi_s,t)$ and select the corresponding positive performance functions $\rho_{v_k}:[t_0,\infty)\rightarrow\mathbb{R}_{>0}$ with $\rho_{v_k}(t) = (\rho^{\scriptscriptstyle 0}_{v_k} - \rho^{\scriptscriptstyle \infty}_{v_k})e^{-l_{v_k}(t-t_0)} + \rho^{\scriptscriptstyle \infty}_{v_k}$, such that $\rho^{\scriptscriptstyle 0}_{v_k} > \lvert e_{v_k}(t_0) \rvert, l_{v_k}>0$ and $\rho^{\scriptscriptstyle 0}_{v_k} > \rho^{\scriptscriptstyle \infty}_{v_k} >0,\forall k\in\{1,\dots,6\}$.

\textbf{Step II-b}. Define the normalized velocity errors $\xi_v:[t_0,\infty)\rightarrow\mathbb{R}^6$: 
\begin{equation}
\xi_v(t) = [\xi_{v_1}(t),\dots,\xi_{v_6}(t))]^T = P^{-1}_v(t)e_v(\xi_s(t),t),	\label{eq:ksi_v}
\end{equation}
where $P_v(t)=\text{diag}\{\left[\rho_{v_k}(t)\right]_{k\in\{1,\dots,6\}}\}$, and design the distributed control protocol  $u_i:\mathbb{R}^6\times\mathbb{R}^6\times[t_0,\infty)\rightarrow\mathbb{R}^6$:
\begin{equation}
u_i(\xi_{s},\xi_{v},t) = - c_i g_v J^{-T}_{\scriptscriptstyle O_i}(\overline{x})P^{-1}_v(t)R_v(\xi_v)\varepsilon_v(\xi_v), \label{eq:control_law_i}
\end{equation}
where $g_v > 0, J_{\scriptscriptstyle O_i}$ as defined in (\ref{eq:jacobian O_i}) and $c_i$ are predefined load sharing coefficients satisfying $\sum_{i\in\{1,\dots,N\}}c_i = 1$ and $0\leq c_i \leq 1, \forall i\in\{1,\dots,N\}$. Also, $R_v:\mathbb{R}^6\rightarrow\mathbb{R}^{6\times6}, \varepsilon_v:\mathbb{R}^6\rightarrow\mathbb{R}^6$ are defined as: 
\begin{eqnarray}
R_v(\xi_v) &=& \text{diag}\left\{\left[2(1 - \xi^2_{v_k})^{-1} \right]_{k\in\{1,\dots,6\}} \right\} \label{eq:r_v} \\
\varepsilon_v(\xi_v)  & = & \left[\ln\left(\dfrac{1+\xi_{v_1}}{1-\xi_{v_1}}\right),\dots,\ln\left(\dfrac{1+\xi_{v_6}}{1-\xi_{v_6}}\right) \right]^T \label{eq:epsilon_v}
\end{eqnarray}
The control law (\ref{eq:control_law_i}) can be written in vector form $\overline{u} = [u^T_1,\dots,u^T_N]^T$:
\small
\begin{equation}
\overline{u}(\xi_{s},\xi_{v},t) = U^{j'}_{j}(t) =  -C_g G^*(\overline{x})P^{-1}_v(t)R_v(\xi_v)\varepsilon_v(\xi_v), \label{eq:control_law_vector_form}
\end{equation}
\normalsize
where \small$C_g = g_v\text{diag}\{\left[c_iI_{6\times6}\right]_{i\in\{1,\dots,N\}}\}\in\mathbb{R}_{\geq 0}^{6N\times6N}$\normalsize, $G^*(\overline{x}) = [J^{-1}_{\scriptscriptstyle O_1},\dots,J^{-1}_{\scriptscriptstyle O_N}]^T\in\mathbb{R}^{6N\times6}$ and the notation $U^{j'}_{j}$ stands for the transition from $\pi_j$ to $\pi_{j'}$.	

\begin{rem}
It can be verified by (\ref{eq:x_dot_d}) and (\ref{eq:control_law_i}) that the proposed control protocol is distributed in the sense that each agent needs feedback only from the state of the object's center of mass, which can be obtained by (\ref{eq: object agent pos}) and (\ref{eq:object-end-effector jacobian}), as discussed in Section \ref{subsec:system_model}. The parameters needed for the computation of $\rho_{s_k},\rho_{v_k}, \eta_d$ as well as $g_{s_k},c_i, k\in\{1,\dots,6\},i\in\{1,\dots,N\}$ can be transmitted off-line to the agents. Moreover, the overall control scheme does not incorporate any prior knowledge of the model nonlinearities/disturbances or force/torque measurements at the contact points.
\end{rem}
\begin{rem}
Note that internal force regulation can be also guaranteed by including in \eqref{eq:control_law_vector_form} a term of the form $(I_{6N}-\tfrac{1}{N}G^*G^T)\hat{f}_{\text{int,d}}$, where $\hat{f}_{\text{int,d}}$ is a constant vector that can be transmitted off-line to the agents. Nevertheless, the computation of $G^*G^T$ requires knowledge of all grasping points $p_{\scriptscriptstyle E_i}$, which reduces to knowledge of the offsets $p^{\scriptscriptstyle O}_{\scriptscriptstyle E_i/O}$ (since the agents can compute $R_{\scriptscriptstyle O}$ and $p_{\scriptscriptstyle O}$), that can be also transmitted off-line to the agents.
\end{rem}

The next theorem summarizes the results of this section. 
\begin{thm}
Consider $N$ agents rigidly grasping an object with unknown coupling dynamics described by (\ref{eq:coupled dynamics 2}) and $\mathcal{A}(\overline{q}(t_0))\in\pi_j,j\in\{1,\dots,R\}$. Then, the distributed control protocol (\ref{eq:ksi_s})-(\ref{eq:epsilon_v}) guarantees that $\pi_j\rightarrow\pi_{j'}$ with time duration $\delta t_{j,j'}$ and all closed loop signals being bounded, and thus establishes a transition relation between $\pi_j$ and $\pi_{j'}$ for the coupled system object-agents, according to Def. \ref{def:transition}. 
\end{thm}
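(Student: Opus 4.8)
The plan is to recast the closed loop in normalized error coordinates, in which the prescribed performance constraints become the requirement that the state stay inside the open cube $(-1,1)^{12}$, and then to prove invariance of this cube by a barrier-type Lyapunov argument, invoking Theorem~\ref{Th:dynamical systems} and Proposition~\ref{Prop:dynamical systems} exactly as stated. First I would collect the normalized errors into $\xi=[\xi_s^T,\xi_v^T]^T$ with $\xi_s,\xi_v$ from (\ref{eq:ksi_s}) and (\ref{eq:ksi_v}), and check the initial conditions: since $\mathcal{A}(\overline{q}(t_0))\in\pi_j$, Definition~\ref{def:system in region}(ii) together with $p_d(t_0)=p^c_{\pi_j}$ gives $d(p_{\scriptscriptstyle O}(t_0),p_d(t_0))<l_0$, so $|e_{s_k}(t_0)|<\rho^{\scriptscriptstyle 0}_{s_k}$ for $k\in\{1,2,3\}$, while the remaining $\rho^{\scriptscriptstyle 0}_{s_k},\rho^{\scriptscriptstyle 0}_{v_k}$ are chosen in Steps~I--II to strictly dominate the corresponding initial errors; hence $\xi(t_0)\in\Omega:=(-1,1)^{12}$, an open set.

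Next I would compute $\dot{\xi}=H(t,\xi)$. Differentiating (\ref{eq:ksi_s}) and (\ref{eq:ksi_v}) and substituting $\dot{x}_{\scriptscriptstyle O}=e_v+\dot{x}^*_d$, the reference law (\ref{eq:x_dot_d}), and the coupled dynamics (\ref{eq:coupled dynamics 2}) for $\ddot{x}_{\scriptscriptstyle O}$, everything can be written as a function of $(\xi_s,\xi_v,t)$ because, as noted after Step~I-b, $\overline{x}$ and $\dot{\overline{x}}$ are themselves functions of $(\xi_s,t)$ and $(\xi_s,\dot{\xi}_s,t)$. The crucial algebraic fact is the cancellation $G^T C_g G^* = g_v\big(\sum_i c_i\big)I_{6\times6}=g_vI_{6\times6}$, which follows from $J_{\scriptscriptstyle O_i}J^{-1}_{\scriptscriptstyle O_i}=I$ and $\sum_i c_i=1$; it reduces $G^T\overline{u}$ to $-g_vP^{-1}_vR_v\varepsilon_v$ and is precisely what lets the model-free, communication-free input act on the object. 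One then verifies that $H$ is continuous in $t$ and locally Lipschitz in $\xi$ on $\Omega$ (the logarithms in (\ref{eq:epsilon_s}),(\ref{eq:epsilon_v}) are smooth on $(-1,1)$, $\widetilde{M}^{-1}$ exists by positive definiteness, and $P_s,P_v,J_{\scriptscriptstyle O_i}$ are invertible), so Theorem~\ref{Th:dynamical systems} yields a maximal solution on $[t_0,t_{\max})$ with $\xi(t)\in\Omega$ there.

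The core of the proof is a cascaded barrier argument showing $\xi(t)$ stays in a compact subset of $\Omega$. In the first stage I would take $V_s=\tfrac12\varepsilon_s^T\varepsilon_s$ and differentiate along the flow; the negative term $-\varepsilon_s^TR_sP^{-1}_sG_s\varepsilon_s$ (with $R_s=\mathrm{diag}\{2/(1-\xi^2_{s_k})\}$, $G_s=\mathrm{diag}\{g_{s_k}\}$) dominates, while the residual $e_v-\dot{x}_d-\dot{P}_s\xi_s$ is bounded by a constant on $\Omega$ because $|\xi_{v_k}|<1$ bounds $e_v=P_v\xi_v$, $\dot{x}_d$ is a bounded reference, and $\dot{\rho}_{s_k}$ is bounded. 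Hence $\dot{V}_s<0$ for $\|\varepsilon_s\|$ large, so $\varepsilon_s$ is bounded and, by the bijectivity of (\ref{eq:epsilon_s}), $\xi_s$ is confined to a compact subset of $(-1,1)^6$; this in turn bounds $\dot{x}^*_d$, $\dot{\xi}_s$, and $\ddot{x}^*_d$. In the second stage I would repeat with $V_v=\tfrac12\varepsilon_v^T\varepsilon_v$: the term $-g_v\varepsilon_v^TR_vP^{-1}_v\widetilde{M}^{-1}P^{-1}_vR_v\varepsilon_v$ is negative definite since $\widetilde{M}^{-1}$ is positive definite and bounded below on the now-compact range of $\overline{x}$, while $\widetilde{C}\dot{x}_{\scriptscriptstyle O}$, $\widetilde{h}$, $\widetilde{w}$, $\ddot{x}^*_d$, $\dot{P}_v\xi_v$ are all bounded, giving boundedness of $\varepsilon_v$ and confinement of $\xi_v$. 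Combining the two stages, $\xi(t)$ remains in a compact $\Omega'\subsetneq\Omega$ for all $t\in[t_0,t_{\max})$; Proposition~\ref{Prop:dynamical systems} then forces $t_{\max}=\infty$, so $|\xi_{s_k}(t)|,|\xi_{v_k}(t)|<1$ for all $t\ge t_0$.

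Finally I would translate back. From $|\xi_{s_k}|<1$ we get $|e_{s_k}(t)|<\rho_{s_k}(t)\le\rho^{\scriptscriptstyle 0}_{s_k}$ for all $t\ge t_0$; choosing the position functions so that the component bounds yield $d(p_{\scriptscriptstyle O}(t),p_d(t))<l_0$, the reasoning around (\ref{eq:p_s in L_hat}) and (\ref{eq:desired_tr}) gives $p_s\in\mathcal{B}(p_{\scriptscriptstyle O},\hat{L})\subset\mathcal{B}(p_d(t),\hat{L}+l_0)\subset\pi_j\cup\pi_{j'}$ for all $p_s\in\mathcal{S}_{\overline{q}}$ and all $t\in[t_0,t_0+\delta t_{j,j'}]$, which is the containment clause of Definition~\ref{def:transition}. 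At $t=t_0+\delta t_{j,j'}$, since $p_d=p^c_{\pi_{j'}}$, the same bound gives $d(p_{\scriptscriptstyle O},p^c_{\pi_{j'}})<l_0$ and $\mathcal{B}(p_{\scriptscriptstyle O},\hat{L})\subset\pi_{j'}$, i.e. $\mathcal{A}(\overline{q}(t_0+\delta t_{j,j'}))\in\pi_{j'}$ by Definition~\ref{def:system in region}; boundedness of all closed-loop signals ($x_{\scriptscriptstyle O}=P_s\xi_s+x_d$, $\dot{x}_{\scriptscriptstyle O}$, and $\overline{u}$ from (\ref{eq:control_law_vector_form})) follows from the compactness established above. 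I expect the main obstacle to be the Lyapunov stage: making rigorous that the residual terms are bounded by constants \emph{independent} of the (a priori unknown) solution, which requires the correct cascade ordering ($\xi_s$ before $\xi_v$) and the positive definiteness and boundedness of $\widetilde{M}$ on the compact configuration set, rather than the comparatively routine existence and translation steps.
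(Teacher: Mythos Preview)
Your proposal is correct and follows essentially the same approach as the paper: recast the closed loop in the normalized errors $\xi=(\xi_s,\xi_v)\in(-1,1)^{12}$, invoke Theorem~\ref{Th:dynamical systems} for a maximal solution, run a cascaded Lyapunov argument on $\varepsilon_s$ then $\varepsilon_v$ to trap $\xi$ in a compact subset, apply Proposition~\ref{Prop:dynamical systems} to get $t_{\max}=\infty$, and finally translate the bounds back via (\ref{eq:p_s in L_hat}) and (\ref{eq:desired_tr}). The only cosmetic differences are that the paper treats the position-level stage componentwise with $V_{s_k}=\varepsilon_{s_k}^2$ rather than your vector $V_s=\tfrac12\|\varepsilon_s\|^2$, and that you make the grasp-matrix cancellation $G^TC_gG^*=g_vI_6$ explicit whereas the paper absorbs it into the substitution of (\ref{eq:control_law_vector_form}) together with $\sum_i c_i=1$.
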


\begin{proof}
Differentiating (\ref{eq:ksi_s}) and (\ref{eq:ksi_v}) with respect
to time, employing (\ref{eq:e_s}), (\ref{eq:coupled dynamics 2}) as well as the facts that \small$\dot{x}_{\scriptscriptstyle O} = \dot{x}^*_d + P_v(t)\xi_v, \sum_{i=1}^N c_{i}=1$ \normalsize and substituting (\ref{eq:x_dot_d}), (\ref{eq:control_law_vector_form}), we obtain:
\small
\begin{eqnarray}
\dot{\xi}_{s_k}(\xi,t) & = & h_{s_k}(\xi,t) \nonumber \\
& = & -g_{s_k}\rho^{-1}_{s_k}(t)\varepsilon_{s_k}(\xi_{s_k}) - \rho^{-1}_{s_k}(t)(\dot{x}_{d_k}(t) \nonumber \\
&  &  + \dot{\rho}_{s_k}(t)\xi_{s_k} - \rho_{v_k}(t)\xi_{v_k}) \label{xi_s_dot}, \forall k\in\{1,\dots,6\} \label{eq:ksi_s_dot} \\
\dot{\xi}_v(\xi,t) & = & h_v(\xi,t) \nonumber \\
 & = & -g_vP^{-1}_v(t)\widetilde{M}^{-1}(\overline{x})P^{-1}_v(t)R_v(\xi_v)\varepsilon_v(\xi_v) \nonumber \\
& & - P^{-1}_v(t)\left[ \widetilde{M}^{-1}(\overline{x})( \widetilde{C}(\overline{x},\dot{\overline{x}})(P_v(t)\xi_v + \dot{x}_d^*) \right.    \nonumber \\
& &\left. + \widetilde{h}(\overline{x},\dot{\overline{x}}) + \widetilde{w}(\overline{x},t)) + \dot{P}_v(t)\xi_v + \dfrac{\partial}{\partial t}\dot{x}^*_d \right]  
\label{eq:ksi_v_dot}
\end{eqnarray}
\normalsize
for all $t\geq t_0$ with $\xi(t) = [\xi^T_s(t), \xi^T_v(t)]^T$. By defining $h:\mathbb{R}^6\times\mathbb{R}^6\times[t_0,\infty)\rightarrow \mathbb{R}^{12}$, we can write (\ref{eq:ksi_s_dot}), (\ref{eq:ksi_v_dot}) in compact form:
\begin{equation} 
\dot{\xi} = h(\xi,t) = [h^T_s(\xi,t),h^T_v(\xi,t)]^T \label{eq:ksi_dot}
\end{equation}
We further define the open and nonempty set $\Omega_{\xi} = \Omega_{\xi_s}\times\Omega_{\xi_v} \subset \mathbb{R}^{12}$ with $\Omega_{\xi_s} =\Omega_{\xi_v} = (-1,1)^6$.

Since $\mathcal{A}(\overline{q}(t_0))\in\pi_j$, Def. \ref{def:system in region} implies that $d(p_{\scriptscriptstyle O}(\overline{q}(t_0)), p^c_{\pi_j})<l_0$. Also, $p_d(t_0)=p^c_{\pi_j}$. Therefore, by choosing $\rho^{\scriptscriptstyle 0}_{s_k}=l_0, \forall k\in\{1,2,3\}$ as well as $\rho^{\scriptscriptstyle 0}_{s_k} > \lvert e_{s_k}(t_0)\rvert, \forall k\in\{4,5,6\}$ we guarantee that $\xi_s(t_0)\in\Omega_{\xi_s}$. Furthermore, by selecting $\rho^{\scriptscriptstyle 0}_{v_k} > \lvert e_{v_k}(t_0) \rvert, \forall k\in\{1,\dots,6\}$, we guarantee that $\xi_v(t_0)\in\Omega_{\xi_v}$. Thus, we conclude that $\xi(t_0)\in\Omega_\xi$. Additionally, $h$
is continuous on $t$ and locally Lipschitz on $\xi$ over $\Omega_\xi$. Therefore, according to Theorem $\ref{Th:dynamical systems}$, there exists a maximal solution $\xi(t)$ of (\ref{eq:ksi_dot}) on a time interval $[t_0,t_{\max})$ such that $\xi(t)\in\Omega_\xi,\forall t\in[t_0,t_{\max})$. Thus: 
\begin{equation}
\xi_{m_k}(t) = \dfrac{e_{m_k}}{\rho_{m_k}}\in(-1,1), \forall k\in\{1,\dots,6\}, m\in\{s,v\} \label{eq:ksi_m}
\end{equation}
$\forall t\in[t_0,t_{\max})$, from which we obtain that $e_{s_k}(t)$ and $e_{v_k}(t)$ are bounded by $\rho_{s_k}(t)$ and $\rho_{v_k}(t)$, respectively. Therefore, the error vectors $\varepsilon_{s_k}(\xi_s),\forall k\in\{1,\dots,6\}$ and $\varepsilon_v(\xi_v)$, as defined in (\ref{eq:epsilon_s}) and (\ref{eq:epsilon_v}), respectively, are well defined for all $t\in[t_0,t_{\max})$. Hence, consider the positive definite and radially unbounded functions $V_{s_k}:\mathbb{R}\rightarrow\mathbb{R}_{\geq 0}$ with $V_{s_k}(\varepsilon_{s_k}) = \varepsilon_{s_k}^2, \forall k\in\{1,\dots,6\}$. By differentiating $V_{s_k}$ with respect to time and substituting (\ref{eq:ksi_s_dot}), we obtain:
\small
\begin{equation}
\dot{V}_{s_k}  =  -\dfrac{ 4\varepsilon_{s_k}\rho^{-1}_{s_k}(t)}{1-\xi^2_{s_k}}(g_{s_k}\varepsilon_{s_k} + \dot{x}_{d_k}(t) + \dot{\rho}_{s_k}(t)\xi_{s_k} - \rho_{v_k}(t)\xi_{v_k} )  \nonumber 
\end{equation}
\normalsize
Next, since $\dot{x}_{d_k}, \rho_{v_k},\dot{\rho}_{s_k}$ are bounded by construction and $\xi_{s_k},\xi_{v_k}$ are bounded in $(-1,1)$ owing to (\ref{eq:ksi_m}), $\dot{V}_{s_k}$ becomes:
\begin{equation}
\dot{V}_{s_k} \leq \dfrac{4\rho^{-1}_{s_k}(t)}{1-\xi^2_{s_k}}(\overline{B}_s\lvert \varepsilon_{s_k} \rvert - g_{s_k}\lvert \varepsilon_{s_k}\rvert^2), \nonumber
\end{equation}
$\forall t\in[t_0,t_{\max})$, where $\overline{B}_s$ is an unknown positive constant independent of $t_{\max}$ satisfying \small $\overline{B}_s \geq \lvert \dot{x}_{d_k}(t) + \dot{\rho}_{s_k}(t)\xi_{s_k} - \rho_{v_k}(t)\xi_{v_k}  \rvert$ \normalsize. Therefore, we conclude that $\dot{V}_{s_k}$ is negative when \small $\lvert \varepsilon_{s_k} \rvert > \overline{B}_s g_{s_k}^{-1}$ \normalsize and subsequently that 
\begin{equation}
\lvert \varepsilon_{s_k}(\xi_{s_k}(t)) \rvert \leq \overline{\varepsilon}_{s_k} = \max\left\{\lvert \varepsilon_{s_k}(\xi_{s_k}(t_0))\rvert, \overline{B}_s g_{s_k}^{-1}\right\},  \label{eq:epsilon_s_bar}
\end{equation} 
$\forall t\in[t_0,t_{\max}), k\in\{1,\dots,6\}$. Furthermore, from (\ref{eq:epsilon_s}), taking the inverse logarithm, we obtain:
\small
\begin{equation}
-1 < \dfrac{e^{-\overline{\varepsilon}_{s_k}} - 1}{e^{-\overline{\varepsilon}_{s_k}} + 1} = \underline{\xi}_{s_k} \leq \xi_{s_k}(t) \leq \overline{\xi}_{s_k} = \dfrac{e^{\overline{\varepsilon}_{s_k}} - 1}{e^{\overline{\varepsilon}_{s_k}} + 1} < 1 \label{eq:epsilon_s_bars}
\end{equation}
\normalsize
$\forall t\in[t_0,t_{\max}), k\in\{1,\dots,6\}$. Due to (\ref{eq:epsilon_s_bar}), the reference velocity vector \small$\dot{x}^*_d(\xi_s,t)$\normalsize, as defined in (\ref{eq:x_dot_d}),  remains bounded for all $t\in[t_0,t_{\max})$. Moreover, invoking \small$\dot{x}_{\scriptscriptstyle O} = \dot{x}^*_d(\xi_s,t) + P_v(t)\xi_v$ \normalsize and (\ref{eq:ksi_m}), we also conclude the boundedness of $\dot{x}_{\scriptscriptstyle O}$ for all $t\in[t_0,t_{\max})$. Finally, differentiating $\dot{x}^*_d(\xi_s,t)$ with respect to time and employing (\ref{eq:ksi_s_dot}), (\ref{eq:ksi_m}) and (\ref{eq:epsilon_s_bars}), we conclude the boundedness of \small $\dfrac{\partial}{\partial t}\dot{x}^*_d(\xi_s,\dot{\xi}_s,t)$, \normalsize $\forall t\in[t_0,t_{\max})$.

Applying the aforementioned line of proof, we consider the positive definite and radially unbounded function $V_v:\mathbb{R}^6\rightarrow\mathbb{R}$ with $V_v(\varepsilon_v) =$ \small$ 0.5\lVert \varepsilon_v \rVert^2$\normalsize. By differentiating $V_v$ with respect to time, we substitute (\ref{eq:ksi_v_dot}) and by employing (i) the continuity of $\widetilde{M},\widetilde{C},\widetilde{h}$ and (ii) the boundedness of \small$w_{\scriptscriptstyle O},w_i,\xi_s,\xi_v,\dot{P}_v, \dfrac{\partial}{\partial t}\dot{x}^*_d$, \normalsize $\forall t\in[t_0,t_{\max})$, we obtain:
\small
\begin{equation}
\dot{V}_v \leq  \lVert P_v^{-1}(t)R_v(\xi_v)\varepsilon_v \rVert ( \overline{B}_v -  g_v\lambda_m \lVert P_v^{-1}(t)R_v(\xi_v)\varepsilon_v \rVert ) \nonumber
\end{equation}
\normalsize
$\forall t\in[t_0,t_{\max})$, where $\lambda_m$ is the minimum singular value of the positive definite matrix $\widetilde{M}^{-1}$ and $\overline{B}_v$ is a positive constant independent of $t_{\max}$, satisfying 
\small
\begin{eqnarray}
\overline{B}_v &\geq& \left\lVert \widetilde{M}^{-1}(\overline{x})(C(\overline{x},\dot{\overline{x}}) (P_v(t)\xi_v + \dot{x}_d^*(t)) + \widetilde{h}(\overline{x},\dot{\overline{x}}) + \right.  \nonumber \\
 & & \left.  \widetilde{w}(\overline{x},t) + \dot{P}_v\xi_v + \dfrac{\partial}{\partial t}\dot{x}_d^*(\xi_s,\dot{\xi}_s,t)  )  \right\rVert \nonumber
\end{eqnarray}
\normalsize
Therefore, $\dot{V}_v$ is negative when \small $\lVert P_v^{-1}(t)R_v(\xi_v)\varepsilon_v \rVert > \overline{B}_v(g_v\lambda_m)^{-1}$\normalsize, which, by employing the definitions of $P_v$ and $R_v$, becomes \small $\lVert \varepsilon_v \rVert > \overline{B}_v(g_v\lambda_m)^{-1} \overline{\rho}^{\scriptscriptstyle 0}_{v_k}$, \normalsize with $\overline{\rho}^{\scriptscriptstyle 0}_{v_k}=\max_{k\in\{1,\dots,6\}}\{\rho^{\scriptscriptstyle 0}_{v_k}\} $. Therefore, we conclude that 
\begin{equation}
\lVert \varepsilon_v(\xi_v(t)) \rVert \leq \overline{\varepsilon}_v = \max\left\{\varepsilon_v(\xi_v(t_0)), \overline{B}_v(g_v\lambda_m)^{-1}\overline{\rho}^{\scriptscriptstyle 0}_{v_k} \right\}, \nonumber
\end{equation}
$\forall t\in[t_0,t_{\max})$. Furthermore, from (\ref{eq:epsilon_v}), taking the inverse logarithm and invoking that $\lvert \varepsilon_{v_k} \rvert \leq \lVert \varepsilon_v \rVert$, we obtain:
\small
\begin{equation}
-1 < \dfrac{e^{-\overline{\varepsilon}_{v_k}} - 1}{e^{-\overline{\varepsilon}_{v_k}} + 1} = \underline{\xi}_{v_k} \leq \xi_{v_k}(t) \leq \overline{\xi}_{v_k} = \dfrac{e^{\overline{\varepsilon}_{v_k}} - 1}{e^{\overline{\varepsilon}_{v_k}} + 1} < 1 \label{eq:epsilon_v_bars}
\end{equation}
\normalsize
$\forall t\in[t_0,t_{\max}), k\in\{1,\dots,6\}$, which also leads to the boundedness of the distributed control protocol (\ref{eq:control_law_vector_form}).
Up to this point, what remains to be shown is that $t_{\max}$ can
be extended to $\infty.$ In this direction, notice by (\ref{eq:epsilon_s_bars})
and (\ref{eq:epsilon_v_bars}) that $\xi(t)\in\Omega'_\xi = \Omega'_{\xi_s}\times\Omega'_{\xi_v},\forall t\in[t_0,t_{\max})$, where: 
\begin{equation}
 \Omega'_{\xi_m} = [\underline{\xi}_{m_1},\overline{\xi}_{m_1}]\times\cdots\times[\underline{\xi}_{m_6},\overline{\xi}_{m_6}], m\in\{s,v\} \nonumber  
\end{equation}
are nonempty and compact subsets of $\Omega_{\xi_s}$ and $\Omega_{\xi_v}$, respectively. Hence, assuming that $t_{\max} < \infty$ and since $\Omega'_{\xi}\subset\Omega_{\xi}$, Proposition \ref{Prop:dynamical systems} dictates the existence of a time instant $t'\in[t_0,t_{\max})$ such that $\xi(t')\notin\Omega'_\xi$, which is a clear contradiction. Therefore, $t_{\max} = \infty$. Thus, all closed loop signals remain bounded and moreover $\xi(t)\in\Omega'_\xi,\forall t\geq t_0$. 

By multiplying (\ref{eq:epsilon_s_bars}) with $\rho_{s_k}(t)$, we obtain $\lvert e_{s_k}(t) \rvert < \rho_{s_k}(t), \forall k\in\{1,\dots,6\}$ and hence $\lvert e_{s_k}(t) \rvert < l_0,\forall k\in\{1,2,3\}, t\in[t_0,\infty)$, since $\rho^{\scriptscriptstyle 0}_{s_k}=l_0, \forall k\in\{1,2,3\}$. Therefore, $p_{\scriptscriptstyle O}(\overline{q}(t))\in\mathcal{B}(p_d(t),l_0), \forall t\geq t_0$ and, consequently, $p_{\scriptscriptstyle O}(\overline{q}(t_0+\delta t_{j,j'}))\in\mathcal{B}(p^c_{\pi_{j'}},l_0)$, since $p_d(t_0+\delta t_{j,j'}) = p^c_{\pi_{j'}}$. Also, since $p_{\scriptscriptstyle O}(\overline{q}(t))\in\mathcal{B}(p_d(t),l_0)$, we deduce that $\mathcal{B}(p_{\scriptscriptstyle O}(\overline{q}(t)),\hat{L})\in\mathcal{B}(p_d(t),l_0+\hat{L})$ and invoking (\ref{eq:p_s in L_hat}) and (\ref{eq:desired_tr}), we conclude that $p_s\in\pi_j\cup\pi_{j'}, \forall t\in[t_0,t_0+\delta t_{j,j'}]\subset[t_0,\infty)$, and therefore a transition relation with time duration $\delta t_{j,j'}$ is successfully established. 
 \end{proof}
 
 \begin{rem}
 Instead of employing the control protocol (\ref{eq:control_law_vector_form}) over $[t_0,\infty)$, we can define it over a finite time interval as $\overline{u}=U^{j'}_{j}([t_0,t_0+\delta t_{j,j'}))$. In that case, it follows by the continuity of $d, p_{\scriptscriptstyle O}, p_d$ that $\lim_{t\rightarrow (t_0+\delta t_{j,j'})^{-}} d(p_{\scriptscriptstyle O}(t),p_d(t)) = d(p_{\scriptscriptstyle O}(t_0+\delta t_{j,j'}),p^c_{\pi_{j'}})$ and therefore, the transition $\pi_j\rightarrow\pi_{j'}$ with time duration $\delta t_{j,j'}$ is still achieved. Moreover, the predefined selection of $\delta t_{j,j'}$ for each transition $\pi_j \rightarrow \pi_{j'}$ is related to the control capabilities of the agents, since smaller $\delta t{j,j'}$ will produce larger, but still bounded, $\dot{x}^*_d$ and $\overline{u}$.
 \end{rem}

\subsection{High-Level Plan Generation} \label{subsec:High level}
The second part of our solution is the derivation of a high-level plan that satisfies the given MITL formula $\phi$ and can be generated using standard techniques from automata-based formal verification methodologies. Thanks to our proposed control law that allows the transition $\pi_j\rightarrow\pi_{j'}$ for all $\pi_j\in\Pi$ with $\pi_{j'}\in\mathcal{D}(\pi_j)$ in a predefined time interval $\delta t_{j,j'}$, we can abstract the motion of the coupled system object-agents as a finite weighted transition system (WTS) \cite{baier2008principles} $\mathcal{T} = \{\Pi, \Pi_0, \rightarrow, \mathcal{AP}, \mathcal{L},  \gamma \}$, where $\Pi$ is the set of states defined in Section \ref{subsec:wsp discret}, $\Pi_0\subset\Pi$ is a set of initial states, $\rightarrow\subseteq\Pi\times\Pi$ is a transition relation according to Def. \ref{def:transition}, $\mathcal{AP}$ and $\mathcal{L}$ are the atomic propositions and the labeling function, respectively, as defined in Section \ref{subsec:specf}, and $\gamma:\rightarrow\rightarrow\mathbb{R}_{\geq 0}$ is a map that assigns to each transition its time duration, i.e., $\gamma(\pi_j\rightarrow\pi_{j'}) = \delta t_{j,j'}$. 
Therefore, by designing the switching control protocol $U^{r_{j+1}}_{r_j}(t)$ from (\ref{eq:control_law_vector_form}):
\begin{equation}
\overline{u} = U^{r_{j+1}}_{r_j}(t), \forall t\in[t_j, t_j+\delta t_{r_j,r_{j+1}}), j\in\mathbb{N}, \label{eq:switch_control}
\end{equation}
with (i) $t_1 = 0$, (ii) $t_{j+1} = t_{j} + \delta t_{r_j,r_{j+1}}$ and (iii) $r_j\in\{1,\dots,R\}, \forall j\in\mathbb{N}$, 
we can define a \textit{timed run} of the WTS as an infinite sequence $r = (\pi_{r_1},t_1)(\pi_{r_2},t_2)\dots$, where $\pi_{r_1}\in\Pi_0$ with $\mathcal{A}(\overline{q}(0))\in\pi_{r_1}, \pi_{r_j}\in\Pi, r_j\in\{1,\dots,R\}$ and $t_j$ are the corresponding time stamps such that $\mathcal{A}(\overline{q}(t_j))\in\pi_{r_j}, \forall j\in\mathbb{N}$. Every timed run $r$ generates a \textit{timed word} $w(r) = (\mathcal{L}(\pi_{r_1}),t_1)(\mathcal{L}(\pi_{r_2}),t_2)\dots$ over $\mathcal{AP}$ where $\mathcal{L}(\pi_{r_j}), j\in\mathbb{N}$ is the subset of the  atomic propositions $\mathcal{AP}$ that are true when $\mathcal{A}(\overline{q}(t_j))\in\pi_{r_j}$.

 The given MITL formula $\phi$ is translated into a \textit{Timed Büchi Automaton} $\mathcal{A}^t_{\phi}$ \cite{Alur94_timed_automata} and the product $\mathcal{A}_{p}=\mathcal{T}\otimes\mathcal{A}^t_{\phi}$ is built \cite{baier2008principles}. The projection of the accepting runs of $\mathcal{A}_p$ onto $\mathcal{T}$ provides a \textit{timed run} $r_\phi$ of $\mathcal{T}$ that satisfies $\phi$; $r_\phi$ has the form $r_{\phi} = (\pi_{r_1},t_1)(\pi_{r_2},t_2)\dots$, i.e., an infinite\footnote{It can be proven that if such a run exists, then there also exists a run that can be always represented as a finite prefix followed by infinite repetitions of a finite suffix \cite{baier2008principles}.} sequence of regions $\pi_{r_j}$ to be visited at specific time instants $t_j$ (i.e., $\mathcal{A}(\overline{q}(t_j))\in\pi_{r_j}$) with $t_1 = 0$ and $t_{j+1} = t_j + \delta t_{r_j,r_{j+1}}, r_j\in\{1,\dots,R\}, \forall j\in\mathbb{N}$. More details on the technique are beyond the scope of this paper and the reader is referred to \cite{baier2008principles, Alex16,Alur94_timed_automata}. 	
 
 The execution of $r_\phi=(\pi_{r_1},t_1)(\pi_{r_2},t_2)\dots$ produces a trajectory $\overline{q}(t), t\in\mathbb{R}_{\geq 0}$, with timed sequence $\beta_\phi = (\overline{q}_1(t),t_1)(\overline{q}_2(t),t_2)\dots$, with $\mathcal{A}(\overline{q}_j(t_j))\in\pi_{r_j}, \forall j\in\mathbb{N}$. Following Def. \ref{def:specification}, $\beta_{\phi}$ has the timed behavior $\sigma_{\beta_\phi} = (\sigma_1,t_1)(\sigma_2,t_2)\dots$ with $\sigma_j\in\mathcal{L}(\pi_{r_j})$, for $\mathcal{A}(\overline{q}_j(t_j))\in\pi_{r_j}, \forall j\in\mathbb{N}$. Since all $\pi_{r_j}$ belong to $r_\phi, \forall j\in\mathbb{N}$, the latter implies that $\sigma_{\beta_\phi} \models \phi$ and therefore that $\beta_\phi$ satisfies $\phi$. The aforementioned discussion is summarized as follows:
\begin{thm}
The execution of $r_\phi=(\pi_{r_1},t_1)(\pi_{r_2},t_2)\dots$ of $\mathcal{T}$ that satisfies $\phi$ guarantees a timed behavior $\sigma_{\beta_\phi}$ of the coupled system object-agents that yields the satisfaction of $\phi$ and provides, therefore, a solution to Problem \ref{problem 1}. 
\end{thm}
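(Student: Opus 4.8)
The plan is to establish a soundness argument that transfers the MITL satisfaction guaranteed at the discrete level of the WTS $\mathcal{T}$ down to the continuous execution of the coupled system. In other words, I would show that the switching control law (\ref{eq:switch_control}), when fed with the region sequence and time stamps prescribed by the accepting timed run $r_\phi$, produces a physical trajectory $\overline{q}(t)$ whose timed sequence $\beta_\phi$ is an exact realization of $r_\phi$; the satisfaction $\sigma_{\beta_\phi}\models\phi$ then follows because $\beta_\phi$ and $r_\phi$ induce identical timed words over $\mathcal{AP}$, while $r_\phi$ satisfies $\phi$ by construction of the product automaton.

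Concretely, I would argue by induction on the index $j\in\mathbb{N}$ of the run $r_\phi=(\pi_{r_1},t_1)(\pi_{r_2},t_2)\dots$. For the base case, the construction fixes $t_1=0$ and $\pi_{r_1}\in\Pi_0$ with $\mathcal{A}(\overline{q}(0))\in\pi_{r_1}$, which matches the initial condition of Problem \ref{problem 1}. For the inductive step, I assume $\mathcal{A}(\overline{q}(t_j))\in\pi_{r_j}$ and note that, since $\rightarrow$ in $\mathcal{T}$ is defined according to Def. \ref{def:transition}, the consecutive pair $(\pi_{r_j},\pi_{r_{j+1}})$ necessarily satisfies $\pi_{r_{j+1}}\in\mathcal{D}(\pi_{r_j})$. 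Applying the control theorem of Section \ref{subsec:control design} to the input $U^{r_{j+1}}_{r_j}$ over $[t_j,t_j+\delta t_{r_j,r_{j+1}})$ then yields a valid transition $\pi_{r_j}\rightarrow\pi_{r_{j+1}}$ with $\mathcal{A}(\overline{q}(t_{j+1}))\in\pi_{r_{j+1}}$ and, crucially, with time stamp $t_{j+1}=t_j+\delta t_{r_j,r_{j+1}}$, exactly as prescribed by $\gamma$. This establishes $\mathcal{A}(\overline{q}(t_j))\in\pi_{r_j}$ for all $j$, so that the timed sequence $\beta_\phi$ of Def. \ref{def:specification} carries the same regions at the same time instants as $r_\phi$. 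Consequently its timed behavior $\sigma_{\beta_\phi}=(\sigma_1,t_1)(\sigma_2,t_2)\dots$, with $\sigma_j\in\mathcal{L}(\pi_{r_j})$, coincides with the timed word $w(r_\phi)$. Since $r_\phi$ is the projection onto $\mathcal{T}$ of an accepting run of $\mathcal{A}_p=\mathcal{T}\otimes\mathcal{A}^t_\phi$, the word $w(r_\phi)$ satisfies $\phi$; hence $\sigma_{\beta_\phi}\models\phi$ and $\beta_\phi$ satisfies $\phi$, solving Problem \ref{problem 1}.

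The main obstacle, and the reason the preceding control result is indispensable, is the \emph{exactness of the timing}: the abstraction is sound only if each transition is completed in precisely the predefined duration $\delta t_{r_j,r_{j+1}}$ rather than merely asymptotically, so that the discrete time stamps $t_j$ are physically realizable and the temporal-interval constraints carried by the MITL operators are met. This is exactly what the prescribed-performance design secures, since the performance functions confine $p_{\scriptscriptstyle O}$ to the $l_0$-tube around $p_d$ for \emph{all} $t\geq t_0$, and therefore at $t_0+\delta t_{r_j,r_{j+1}}$ in particular. A secondary technical point is the well-posedness of the concatenation: because (\ref{eq:switch_control}) switches the controller at each $t_{j+1}$, I would invoke the continuity argument of the closing remark of Section \ref{subsec:control design} to conclude that the boundary value $d(p_{\scriptscriptstyle O}(t_{j+1}),p^c_{\pi_{r_{j+1}}})<l_0$ still holds for the finite-horizon controller, so the inductive hypothesis is correctly re-established at each switch and the infinite run is realized segment by segment.
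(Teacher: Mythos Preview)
Your proposal is correct and follows essentially the same approach as the paper. The paper's argument is in fact the short paragraph immediately \emph{preceding} the theorem statement (the theorem is offered as a summary of that discussion), and it asserts directly that executing $r_\phi$ via the switching law produces a timed sequence $\beta_\phi$ with $\mathcal{A}(\overline{q}_j(t_j))\in\pi_{r_j}$, whose timed behavior $\sigma_{\beta_\phi}$ therefore satisfies $\phi$; your inductive unpacking of this claim, together with the explicit appeals to the control theorem and to the finite-horizon continuity remark for the concatenation, simply make rigorous what the paper leaves implicit.
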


\begin{figure}[!btp]
 \centering
  \subfloat[A robotic agent]{\includegraphics[trim = 0cm -0.5cm 0cm 2	cm,width=0.17\textwidth, height=0.13\textheight]{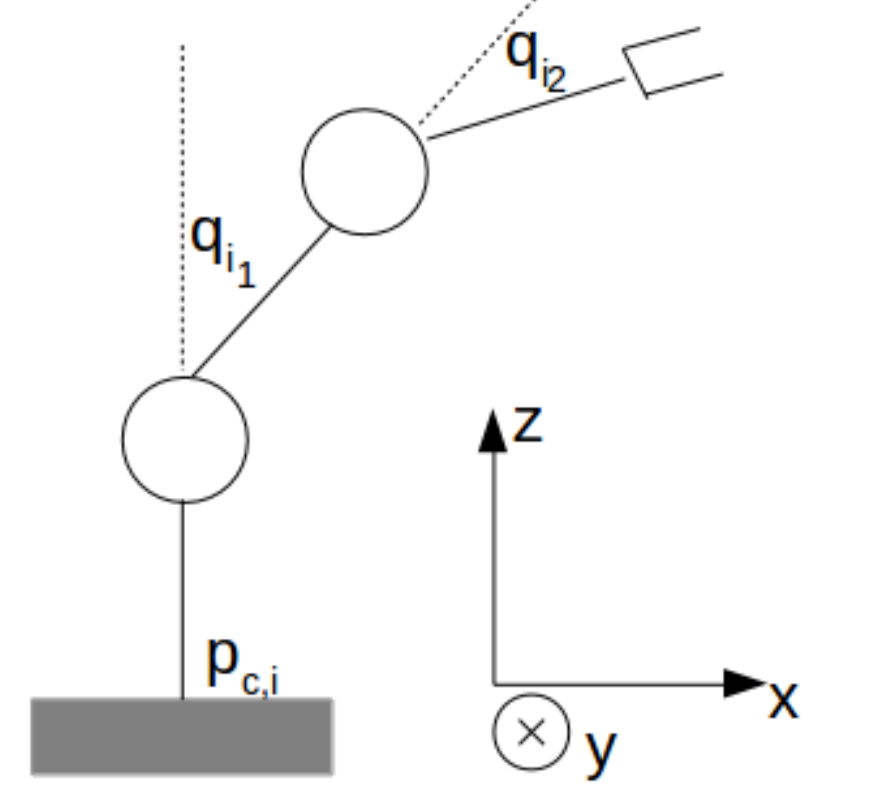}\label{fig:sim_agents}}
  \subfloat[Top view of the initial workspace]{\includegraphics[trim = 0cm 0cm 0cm 0.75cm,width=0.3\textwidth, height=0.18\textheight]{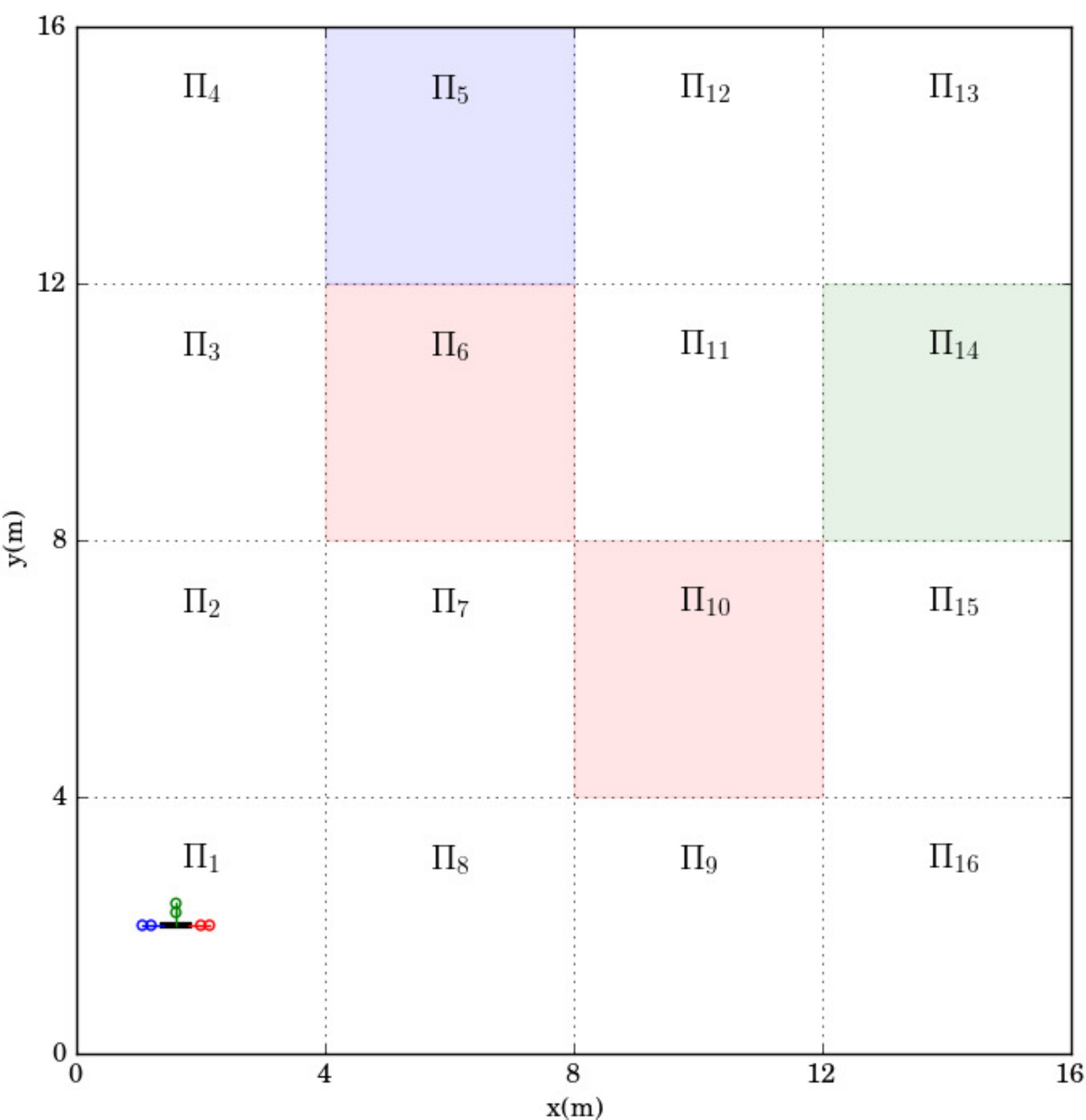}\label{fig:simulation workspace}}
  \caption{(a): A robotic agent. (b): Top view of the initial workspace with three agents (with blue, red and green) carrying an object (with black) in $\pi_1$. Goal regions are marked with blue and green whereas obstacle regions with red.}
  \label{fig:sim_initial}
\end{figure}

\section{Simulation Results}\label{sec:Simulation Results}
The validity of the proposed framework is verified through simulation studies. We consider three robots consisting of two joints, three links and a mobile platform that is able to move along the $x-y$ plane (Fig. \ref{fig:sim_initial}\subref{fig:sim_agents}). We denote as $q_{i}=[p^{T}_{c,i},\eta^{T}_{c,i}, q_{i_{1}},  q_{i_{2}}]^{T}\in\mathbb{R}^{8}$ the generalized coordinates of each agent, where $p_{c,i},\eta_{c,i}\in\mathbb{R}^3$ denote the position and orientation of the $i$th agent's platform and $q_{i_1},q_{i_2}$ are the joint angles, $\forall i\in\{1,2,3\}$. The agents grasp a rigid rod of length $0.4\text{m}$. The initial coordinates of the system are taken as $p_{c,1}(0)=[1.06, 2, 0.1]^{T}\text{m},p_{c,2}(0)=[2.14, 2, 0.1]^{T}\text{m}$, $p_{c,3}(0)=[1.6, 2.34, 0.1]^{T}\text{m}$, $\eta_{c,1}(0)=[0, 0, 0]^{T}\text{r}$, $\eta_{c,2}(0)=[0, 0, \pi]^{T}\text{r}$, $\eta_{c,3}(0)=[0, 0, -\pi/2]^{T}\text{r}$, $q_{i_{1}}=q_{i_{2}}=\frac{\pi}{4}\text{r}$, $\forall i\in\{1,2,3\}$, and $p_{\scriptscriptstyle O}(0) = [1.6,2,0.44]^T\text{m}$, $\eta_{\scriptscriptstyle O}(0)=[0,0,0]^T\text{r}$. 
The workspace is partitioned into $R=16$ regions, with $\hat{L}=1.5\text{m}, l_0=0.5\text{m}$ and $p^c_{\pi_1}=[2,2,2]^T\text{m}$, from which it can be verified that $\mathcal{A}(\overline{q}(0))\in\pi_1$. 
We further choose $\delta t_{j,j'}=5$s, $\forall j,j'\in\{1,\dots,16\}$ and we define the atomic propositions  $\mathcal{AP}=\{\mathsf{``blue"},\mathsf{``green"},\mathsf{``obs"},\emptyset\}$ representing goal ($\mathsf{``blue"}$ and $\mathsf{``green"}$) and obstacle ($\mathsf{``obs"}$) regions as well as $\mathcal{L}(\mathsf{\pi}_{5})=\{\mathsf{``blue"}\},\mathcal{L}(\mathsf{\pi}_{14})=\{\mathsf{``green"}\},\mathcal{L}(\mathsf{\pi}_{6})=\mathcal{L}(\mathsf{\pi}_{10})=\{\mathsf{``obs"}\}$ and $\mathcal{L}(\mathsf{\pi}_{j})=\emptyset$ for the remaining regions. Fig. \ref{fig:sim_initial}\subref{fig:simulation workspace} depicts the aforementioned workspace.

We consider the MITL formula $\phi=(\square_{[0,\infty)}\neg\mathsf{``obs"})\land\lozenge_{[0,50]}(\mathsf{``green"}\land\lozenge_{[0,20]}\mathsf{``blue"})$, which represents the behavior of (i) always avoiding the obstacle regions and (ii) eventually within $50$s visiting the $\mathsf{``green"}$ region and after $20$s the $\mathsf{``blue"}$ region. By following the procedure described in Section \ref{subsec:High level}, we obtain the accepting timed run $r = (\pi_{r_1},t_1)((\pi_{r_2},t_2))\dots=(\pi_1,0) (\pi_2,5) (\pi_3,10) (\pi_4,15) (\pi_5,20) (\pi_{12},25) (\pi_{13},30)\\(\pi_{14},35) (\pi_{11},40) (\pi_{12},45) (\pi_5,50)$.

Regarding each transition $\pi_{r_j}\rightarrow\pi_{r_{j+1}},j\in\{1,\dots,10\}$, we chose $\eta_d=[0,0,0]^T, \rho^{\scriptscriptstyle 0}_{s_k}=l_{0}=0.5,\rho^{\scriptscriptstyle \infty}_{s_k}=0.01$ and $l_{s_{k}}=1$ as well as $\rho^{\scriptscriptstyle 0}_{v_k}=2\lvert e_{v_{k}}(t_j)\rvert+0.1,\rho^{\scriptscriptstyle \infty}_{v_{k}}=0.01$ and $l_{v_{k}}=1, \forall k\in\{1,\dots,6\}$. The control gains were chosen as $g_{s_k}=0.1$ and $g_v=2.5, \forall k\in\{1,\dots,6\}$ to retain the required input signals $u_{i}$ within feasible ranges that can be implemented by real actuators. Finally, $w_{\scriptscriptstyle O},w_{i}$ and $f_{i}$ were chosen as sinusoidal functions of time and the load sharing coefficients were selected as $c_{1}=0.5,c_{2}=0.35$ and $c_{3}=0.15$ to demonstrate a potential difference in the power capabilities of the agents.
Fig. \ref{fig:sim_res} depicts the transitions of the coupled system object-agents. It can be deduced from the figure that $p_{\scriptscriptstyle O}\in\mathcal{B}(p_d(t),l_{0}), \forall t\in[0,50]$ and therefore $p_s\in\pi_{r_j}\cup\pi_{r_{j+1}}, \forall p_s\in\mathcal{S}_{\overline{q}},j\in\{1,\dots,10\}$, verifying thus the theoretical results. Moreover, Fig. \ref{fig:control_inputs} shows the control inputs of the three agents, demonstrating the effect of the load sharing coefficients. 

\begin{figure}[!btp]
\centering
\includegraphics[trim = 0cm 1cm 0cm -1cm,width=0.4\textwidth]{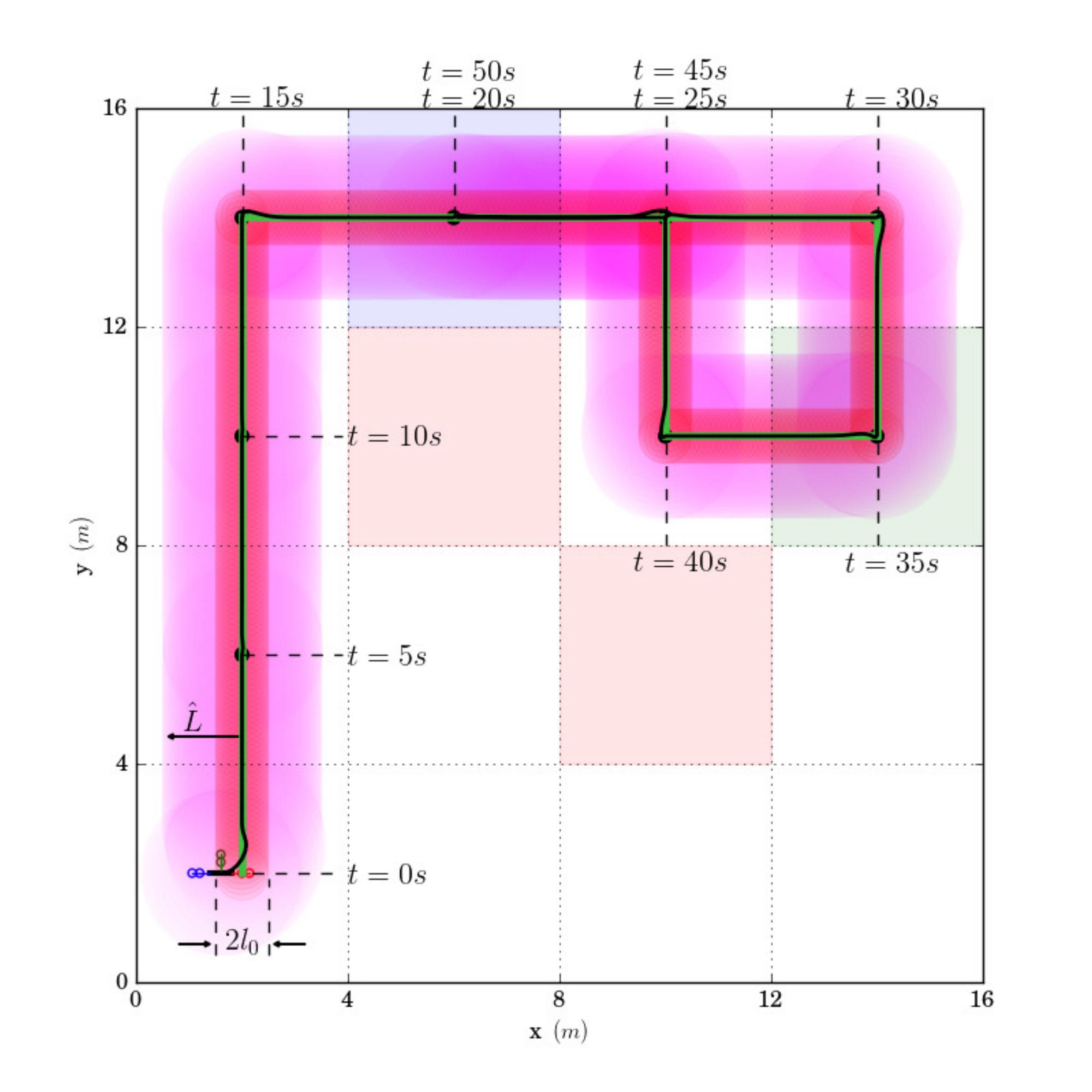}

\caption{The desired object trajectories (with green), the actual object trajectories (with black), the domain specified by $\mathcal{B}(p_d(t),l_{0})$ (with red) and the domain specified by $\mathcal{B}(p_{\scriptscriptstyle O}(t),\hat{L})$, (with purple) for $t\in\left[0,50\right]$ sec. Since $p_{\scriptscriptstyle O}(t)\in\mathcal{B}(p_d(t),l_0),\forall t\in[0,50]$s, the desired timed run is successfully executed. \label{fig:sim_res}}
\end{figure} 

\begin{figure}[!btp]
\centering
\includegraphics[trim = 0cm 0cm 0cm 0cm,width=0.44\textwidth,height=0.4\textwidth]{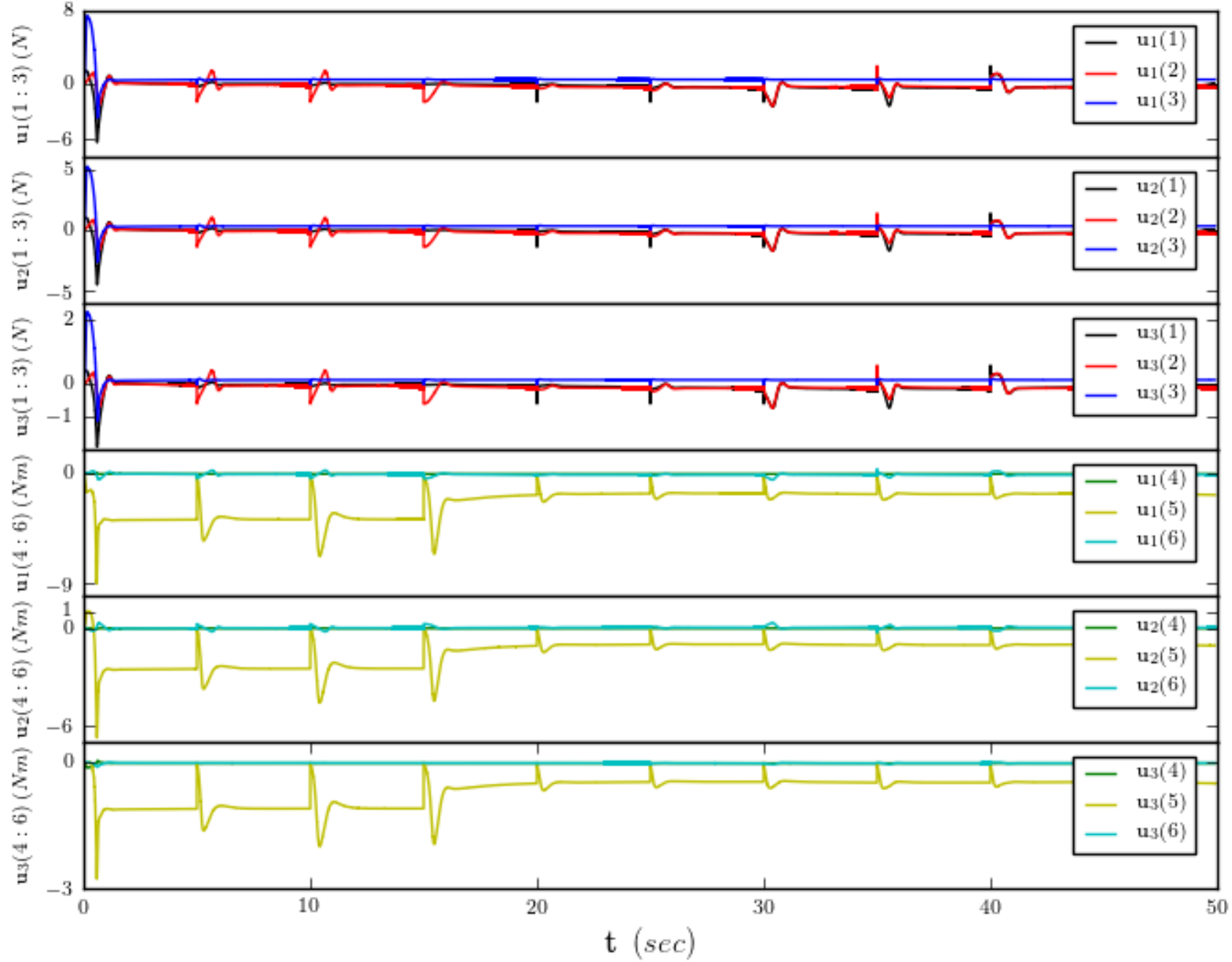}
\caption{The resulting control inputs $u_i = [u_i(1),\dots,u_i(6)]^T, i\in\{1,2,3\}$. Top three plots: $u_i(1),u_i(2),u_i(3)$, for $i=1,2,3$, respectively. Bottom three plots: $u_i(4),u_i(5),u_i(6)$, for $i=1,2,3$, respectively.}
\label{fig:control_inputs} 
\end{figure}

\section{Conclusion and Future Work} \label{sec:Conclusion}
In this work we proposed a hybrid control strategy for the cooperative manipulation of an object by $N$ agents under MITL specifications. In particular, we developed a robust decentralized control protocol that allowed us to abstract the motion of the coupled system object-agents as a finite transition system. Then, we employed standard formal-verification tools for the derivation of a path that satisfied the high level goal. Future efforts will be devoted towards considering non-rigid grasps and compensating for uncertain geometric characteristics of the objects.

\bibliographystyle{IEEEtran}
\bibliography{bib}
\end{document}